\theoremstyle{definition}
\newtheorem{definition}{Definition}[section]
\newtheorem{theorem}[definition]{Theorem}
\newtheorem{corollary}[definition]{Corollary}
\newtheorem{proposition}[definition]{Proposition}
\newtheorem{lemma}[definition]{Lemma}
\newtheorem*{remark}{Remark}
\theoremstyle{remark}
\DeclareMathOperator{\cbias}{c-bias}
\DeclareMathOperator{\bias}{bias}
\DeclareMathOperator{\pbias}{pos-bias}
\DeclareMathOperator{\nbias}{neg-bias}
\DeclareMathOperator{\PR}{PR}
\DeclareMathOperator{\TPR}{TPR}
\DeclareMathOperator{\FPR}{FPR}
\DeclareMathOperator{\ROC}{ROC}
\DeclareMathOperator{\gini}{Gini}
\DeclareMathOperator{\AUROC}{AUROC}
\DeclareMathOperator{\cond}{\,|\,}
\newcommand{\indep}{\perp \!\!\! \perp}
\icmltitlerunning{Standardized Interpretable Fairness Measures for Continuous Risk Scores}
\begin{document}

\twocolumn[
\icmltitle{Standardized Interpretable Fairness Measures for Continuous Risk Scores}



\icmlsetsymbol{equal}{*} 

\begin{icmlauthorlist}
\icmlauthor{Ann-Kristin Becker}{comp}
\icmlauthor{Oana Dumitrasc}{comp}
\icmlauthor{Klaus Broelemann}{comp}
\end{icmlauthorlist}

\icmlaffiliation{comp}{SCHUFA Holding AG, Wiesbaden, Germany}


\icmlcorrespondingauthor{Ann-Kristin Becker}{ann-kristin.becker@schufa.de}
\icmlcorrespondingauthor{Oana Dumitrasc}{oana.dumitrasc@schufa.de}
\icmlcorrespondingauthor{Klaus Broelemann}{klaus.broelemann@schufa.de}

\icmlkeywords{Machine Learning, ICML}

\vskip 0.3in
]



\printAffiliationsAndNotice{}  

\begin{abstract}
We propose a standardized version of fairness measures for continuous scores with a reasonable interpretation based on the Wasserstein distance. Our measures are easily computable and well suited for quantifying and interpreting the strength of group disparities as well as for comparing biases across different models, datasets, or time points. We derive a link between the different families of existing fairness measures for scores and show that the proposed standardized fairness measures outperform ROC-based fairness measures because they are more explicit and can quantify significant biases that ROC-based fairness measures miss. 
\end{abstract}

\section{Introduction}
In recent years, many decision-making processes in areas such as finance, education, social media or medicine 
have been automated, often at least in part with the goal of making those decisions more comparable, objective, and non-discriminatory \cite{estevaDermatologistlevelClassificationSkin2017, holsteinStudentLearningBenefits2018, alvaradoAlgorithmicExperienceInitial2018, bucherAlgorithmicImaginaryExploring2017, raderUnderstandingUserBeliefs2015}.  
For high-risk business transactions between individuals and companies (e.g. in the lending industry), often predictions of machine learning algorithms are incorporated into those decisions. Such algorithms aim to differentiate individuals as optimally as possible based on historical data and in terms of future behavior. They assign risk scores or risk categories to individuals. 
Even with good intentions, the approach runs the risk of directly or indirectly discriminating against individuals on the basis of protected characteristics, such as gender, ethnicity, political background or sexual orientation \cite{larson2016we, datta2014automated, Kchling2020DiscriminatedBA}. That may be the case, if the data reflects biased social circumstances or include prejudicial historical decisions.

Such discriminatory predictions manifest as disparities among protected groups and may occur in different forms and for various reasons. For example, individuals belonging to different protected groups may be assigned different scores even if they have the same outcome, or predictions may turn out to have different levels of consistency with the ground-truth risk. Unfortunately, in most cases different notions of algorithmic fairness are incompatible \cite{barocasFairnessMachineLearning2019, kleinbergInherentTradeOffsFair2018, saravanakumarImpossibilityTheoremMachine2021, chouldechovaFairPredictionDisparate2017b, pleiss2017fairness}. Various measures for algorithmic fairness have been developed that aim to quantify different kinds of group disparities \cite{zafarFairnessDisparateTreatment2017, kamishimaFairnessAwareClassifierPrejudice2012, makhloufApplicabilityMachineLearning2021}. 
So far, most of the available literature discusses the problem in the context of binary decision tasks \cite{mitchellAlgorithmicFairnessChoices2021, barocasFairnessMachineLearning2019, kozodoi2022fairness}. 

However, in many applications, neither a final decision is known, nor is the explicit cost of false predictions. 
This is especially be the case when score and decision are performed by different entities. A prominent example is the COMPAS Score \cite{larson2016we} which was developed by one entity to support decisions done by other entities.
It may also be that a score is never applied as a pure decision but only as a quantitative prediction that affects, e.g. the cost of a product (risk-based pricing). In these cases, fairness can only be fully assessed if the disparities between groups are summarized across the entire score model. 


This paper presents a novel approach to quantifying group disparities for continuous risk score models. It's major contributions are
\begin{itemize}
    \item a well interpretable and mathematically sound method for quantifying group disparities in continuous risk score models.
    \item a standardized framework, that allows for monitoring bias over time or between models and populations, even if there is a shift in the score distribution. 
    Furthermore, standardized measures are unaffected by monotonic transformations of the scores, such as logistic / logit transform. This prevents malicious actors from finding a transformation that hides the bias (see section~\ref{sec:standardized-measures}).
    \item bridging the gap between common fairness-metrics stemming directly from three parity concepts \cite{kleinbergInherentTradeOffsFair2018, hardtEqualityOpportunitySupervised2016, barocasFairnessMachineLearning2019, makhloufApplicabilityMachineLearning2021} and ROC-based approaches \cite{vogelLearningFairScoring2021, kallusFairnessRiskScores2019, yang2022minimax, beutelFairnessRecommendationRanking2019}.
\end{itemize}

As not all group disparities arise from discriminatory circumstances - even large disparities between groups may be explainable or justifiable otherwise - assessing whether disparities are unfair should entail a more detailed analysis of their underlying causes and drivers. Thus, to be explicit, we use the term \emph{disparity measure} instead of \emph{fairness measure} throughout the rest of the paper to underline that all discussed measures are purely observational.

The paper is structured as follows: Most of the available quantitative disparity metrics for classifiers reduce down to three main parity concepts that are based on conditional independence: Independence, separation and sufficiency \cite{barocasFairnessMachineLearning2019, makhloufApplicabilityMachineLearning2021, kozodoi2022fairness}. In Section \ref{section:classifiers}, we discuss these concepts and existing related measures in terms of binary classifiers first, and generalize them to continuous risk scores in Section \ref{section:scores}. We show that our proposed measures are more flexible than many existing metrics and we discuss their interpretability. In Section \ref{section:roc}, we compare the presented measures to ROC-based disparity measures, and we prove that our proposed measures impose a stronger objective and are better suited to detect bias. We outline published related work throughout each section. Section \ref{section:sims} contains results of experiments using benchmark data and Section \ref{section:outlook} includes final discussion and outlook. All proofs of technical results are deferred to the appendix. 
\section{Parity concepts and fairness measures for classifiers}
\label{section:classifiers}
Let $Y$ denote a binary target variable with favorable outcome class $Y=0$ and unfavorable class $Y=1$, and $X$ a set of predictors. Let $S \in \mathcal{S} \subset \mathbb{R}$ denote an estimate of the posterior probability of the favorable outcome of $Y$, $\mathbb{P}(Y=0\,|\,X)$ or some increasing function of this quantity, 
in the following called \emph{(risk) score}, with cumulative distribution function $F_S$ and density function $f_S$. We assume $\mathcal{S}$ to be bounded with $|\mathcal{S}|=\sup \mathcal{S}-\inf \mathcal{S}$ denoting the length of the score range. Let $A$ be a (protected) attribute of interest defining two (protected) groups ($A \in \{a,b\}$ binary w.l.o.g.). We choose $A=b$ as the group of interest, e.g. the expected discriminated group. All discussed measures are purely observational and based on the joint distribution of $(S,A,Y).$ They can be easily calculated if a random sample of the joint distribution is available.

Note that each continuous score $S$ induces an infinite set of binary classifiers by choosing a threshold $s \in \mathcal{S}$ and accepting every sample with $S>s$. We define disparity measures for binary classifiers in dependence of such a threshold value $s$.
For a group $A$, the \emph{positive rate} at a threshold $s$ is given by $\PR_{A}(s)=\mathbb{P}(S> s|A)=1-F_{S|A}(s),$ the \emph{true positive} and \emph{false positive rates} by $\TPR_{A}(s)=1-F_{S|A,Y=0}(s)$ and $\FPR_{A}(s)=1-F_{S|A,Y=1}(s)$, respectively.
We will write in short $F:=F_S$ and $f:=f_S$, as well as $S_{ay}:= S|A=a, Y=y$, $F_{ay}:=F_{S|A=a, Y=y}$ and $S_{by}, F_{by}, f_{ay}, f_{by}$ for the conditional random variables, distribution functions and density functions. 
For a cumulative distribution function $G$, we denote by $G^{-1}$ the related quantile function (generalized inverse) with $G^{-1}(p)=\inf\{x \in \mathbb{R}: p \leq G(x)\}$ which fulfills $G^{-1}(G(X))=X$ almost surely. If $G$ is continuous and strictly monotonically increasing, then the quantile function is the inverse.
\paragraph{Independence (selection rate parity)}
The random variables $S$ and $A$ satisfy independence if $S\indep A,$
which implies $F_{S|A=a} = F_{S|A=b} = F_S.$
Group disparity of classifiers can be quantified by the difference between the positive rates \cite{makhloufApplicabilityMachineLearning2021, zafarFairnessDisparateTreatment2017, dworkFairnessAwareness2012}
\begin{equation}
\begin{split}
\cbias_{\text{IND}}(S_a, S_b; s) &=\PR_b(s) - \PR_a(s) \\
 &= F_{a}(s)-F_{b}(s) .
\end{split}
\end{equation}

The concept of independence contradicts optimality $S=Y$, if $Y \not\!\indep A$ and is, thus, not an intuitive fairness measure in most cases. 
On the other hand, the following two measures, separation and sufficiency, are both compatible with optimality and allow $A\not\!\indep Y$, as they include the target variable $Y$ in the independence statements and allow for disparities that can be explained by group differences in the ground-truth.
\paragraph{Separation (error rate parity)}
The random variables $S$, $A$ and $Y$ satisfy separation if 
${S \indep A \cond Y}.$
For a binary outcome $Y$, the separation condition splits into \emph{true positive rate parity} $F_{S|A=a,Y=0} = F_{S|A=b,Y=0} = F_{S|Y=0}$ (\emph{equal opportunity}, EO) \cite{zhangEqualityOpportunityClassification2018, hardtEqualityOpportunitySupervised2016} and \emph{false positive rate parity} $F_{S|A=a,Y=1} = F_{S|A=b,Y=1} = F_{S|Y=1}$ (\emph{predictive equality}, PE) \cite{corbett-daviesAlgorithmicDecisionMaking2017, makhloufApplicabilityMachineLearning2021}. If both hold, the condition is also known as \emph{equalized odds} \cite{makhloufApplicabilityMachineLearning2021, hardtEqualityOpportunitySupervised2016}.
Group disparity of classifiers can be quantified by the difference between the true and false positive rates
\begin{equation}
\begin{split}
\cbias_{\text{EO}}(S_a, S_b; s) & = \TPR_ b(s)-\TPR_a(s) \\ & = F_{a0}(s)-F_{b0}(s),
\end{split} 
\end{equation} 
\begin{equation}
\begin{split}
\cbias_{\text{PE}}(S_a, S_b; s) &= \FPR_b(s)-\FPR_a(s) \\ &= F_{a1}(s)-F_{b1}(s). 
\end{split} 
\end{equation} 

\paragraph{Sufficiency (predictive value parity)}
The random variables $S$, $A$ and $Y$ satisfy sufficiency if 
$ Y \indep A \cond S$ (in words, $S$ is sufficient to optimally predict $Y$). 
Sufficiency implies group parity of positive and negative predictive values. However, especially in case of continuous scores, usually, \emph{calibration} within each group \cite{kleinbergInherentTradeOffsFair2018} (resp. \emph{test fairness} \cite{chouldechovaFairPredictionDisparate2017b}), as an equivalent concept, is used instead \cite{barocasFairnessMachineLearning2019}.
The calibration bias examines if the model's predicted probability deviates similarly strongly from the true outcome rates within each group:
\begin{equation}
    \begin{split}
    \cbias_{\text{CALI}}(S_a, S_b; s)&=\mathbb{P}(Y=0|A=b,S=s) \\& - \mathbb{P}(Y=0|A=a,S=s).
\end{split}
\end{equation}

\emph{Well-calibration} \cite{kleinbergInherentTradeOffsFair2018, pleiss2017fairness} additionally requires the prediction of both groups to accurately reflect the ground truth $\mathbb{P}(Y=0|A,S=s)=s$. 
For determining the calibration difference, the score range is usually binned into a fixed number of intervals. A high calibration bias reflects the fact that (for a given score $s$) the lower-risk group carries the costs of the higher-risk group.
The concept of sufficiency is especially important if the model is applied in a context, where both, the score and the group membership are available to the decision maker. Then, a high calibration bias will evoke a group-specific interpretation and handling of identical score values. 
On the other hand, sufficiency does not prevent discrimination: high- and low-risk individuals of a group can be mixed and assigned an intermediate risk score without violating sufficiency. Moreover, sufficiency is often naturally fulfilled as a consequence of unconstrained supervised learning, especially if the group membership is (at least to some extent) encoded in the input data. Thus, it is usually not a constraint and not a trade-off with predictive performance \cite{liuImplicitFairnessCriterion2019}. 

If separation is violated, the model output includes more information about the group $A$ as is justified by the ground truth $Y$ alone. So, different groups carry different costs of misclassification. It is therefore a reasonable concept for surfacing potential inequities. Conversely, a violation of sufficiency results in a different calibration and a different meaning of identical score values per group. That is the case, if the relation of $A$ and $Y$ is not properly modeled by the score.

In general, independence, separation and sufficiency are opposing concepts. It can be shown that for a given dataset, except for special cases (like perfect prediction or equal base rates), every pair of the three parity concepts is mathematically incompatible \cite{barocasFairnessMachineLearning2019, kleinbergInherentTradeOffsFair2018, saravanakumarImpossibilityTheoremMachine2021,chouldechovaFairPredictionDisparate2017b, pleiss2017fairness}.

\section{Generalization to continuous risk scores}
\label{section:scores}
We propose to use the expected absolute classifier bias as a disparity measure for scores. Note, that an expected value of zero implies that every classifier derived from the score by choosing a group-unspecific threshold will be bias-free. By evaluating and aggregating the bias across all possible decision thresholds, this generalization serves as a useful diagnostic tool in fairness analyses and follows a similar idea as used in ROC analyses. The two proposed versions can be seen as generalized rate differences. They differ only in the way, in which possible thresholds are weighted. We show, that for the concepts independence and separation, the proposed disparity measures are identical to Wasserstein distances between the groupwise score-distributions.


The use of Wasserstein distance in previous works has focused mainly on independence fairness (e.g. demographic parity), therefore a consideration of all three disparity concepts (independence, separation and sufficiency / calibration) for continuous risk scores is novel to this work.

\subsection{Expected classifier bias with uniformly weighted thresholds}
\begin{definition}
By assuming each threshold $s\in \mathcal{S}$ is equally important, we define
\begin{equation}
 \begin{split}
    \bias_x^{\mathcal{U}}(S_a, S_b)&:= \mathbb{E}_{S\sim\mathcal{U}}[|\cbias_x(S_a, S_b; S)|] \\
    &= \frac{1}{|\mathcal{S}|}\int_{\mathcal{S}}|\cbias_x(S_a, S_b; s)| \, ds.
 \end{split}
\end{equation}

\end{definition}

\begin{theorem}
\label{uniformbias}
For the concepts independence and separation, i.e for $x \in$ \{IND, PE, EO\}, it holds:
\begin{enumerate} 
    \item[(i)] $\bias_x^{\mathcal{U}}(S|A=a, S|A=b)$ is equal to the normalized Wasserstein-1-distance between the conditional score distributions in the groups over the (finite) score region $\mathcal{S}$ i.e. 
\begin{equation}
    \bias_x^{\mathcal{U}}(S_a, S_b)=\frac{1}{|\mathcal{S}|}\cdot W_1(S_{ay}, S_{by}),
\end{equation}
where $y=0$ for $x=\text{EO}$, $y=1$ for $x=\text{PE}$, and $y=\cdot$ for $x=\text{IND}$.
\item[(ii)] As a consequence, we can derive the disparity between average scores per group (known as \emph{balance for the positive / negative class} \cite{kleinbergInherentTradeOffsFair2018}) as a lower bound, i.e.
\begin{align}
\bias_x^{\mathcal{U}}(S_a, S_b)\geq \frac{1}{|\mathcal{S}|} \left|\mathbb{E}[S_{by}]-\mathbb{E}[S_{ay}]\right| .
\end{align}
\end{enumerate}
\end{theorem}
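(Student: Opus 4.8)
The plan is to reduce both parts to the classical one-dimensional characterization of the Wasserstein-1 distance: for probability measures $P,Q$ on $\mathbb{R}$ with cumulative distribution functions $F_P,F_Q$ and quantile functions $F_P^{-1},F_Q^{-1}$,
\[
W_1(P,Q)=\int_{\mathbb{R}}|F_P(t)-F_Q(t)|\,dt=\int_0^1|F_P^{-1}(u)-F_Q^{-1}(u)|\,du .
\]
I would recall (or give a one-line proof of) this identity: the second equality follows because $(F_P^{-1}(U),F_Q^{-1}(U))$ with $U\sim\mathcal{U}[0,1]$ is an optimal coupling in dimension one (monotone rearrangement), using the stated property $G^{-1}(G(X))=X$ a.s.\ to check the marginals; the first equality follows by Fubini, writing $|F_P(t)-F_Q(t)|=\int_0^1|\mathbf{1}\{u\le F_P(t)\}-\mathbf{1}\{u\le F_Q(t)\}|\,du$ and integrating in $t$ first.

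For part (i), unfold the definition of $\bias_x^{\mathcal{U}}$ together with the explicit forms of $\cbias_x$ from Section~\ref{section:classifiers}: for $x=\mathrm{IND}$ this gives $\bias_{\mathrm{IND}}^{\mathcal{U}}(S_a,S_b)=\frac{1}{|\mathcal{S}|}\int_{\mathcal{S}}|F_a(s)-F_b(s)|\,ds$, and for $x=\mathrm{EO}$ (resp.\ $\mathrm{PE}$) the same with $F_a,F_b$ replaced by $F_{a0},F_{b0}$ (resp.\ $F_{a1},F_{b1}$). Since $S$ lives on the bounded set $\mathcal{S}$, each of these conditional CDFs equals $0$ left of $\inf\mathcal{S}$ and $1$ right of $\sup\mathcal{S}$, so the integrand vanishes outside $\mathcal{S}$ and $\int_{\mathcal{S}}|F_{ay}(s)-F_{by}(s)|\,ds=\int_{\mathbb{R}}|F_{ay}(s)-F_{by}(s)|\,ds=W_1(S_{ay},S_{by})$ by the identity above (with the $Y$-conditioning suppressed in the IND case). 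Dividing by $|\mathcal{S}|$ is exactly the claimed equation, with $y=0,1,\cdot$ matching EO, PE, IND as in the definitions of $\cbias_{\text{EO}}$, $\cbias_{\text{PE}}$, $\cbias_{\text{IND}}$.

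For part (ii), I would plug the quantile representation into (i): $W_1(S_{ay},S_{by})=\int_0^1|F_{by}^{-1}(u)-F_{ay}^{-1}(u)|\,du\ge\bigl|\int_0^1\bigl(F_{by}^{-1}(u)-F_{ay}^{-1}(u)\bigr)\,du\bigr|=\bigl|\mathbb{E}[S_{by}]-\mathbb{E}[S_{ay}]\bigr|$, where the inequality is the triangle inequality for integrals and the last step uses $\mathbb{E}[S_{ay}]=\int_0^1F_{ay}^{-1}(u)\,du$ (the expectation of the inverse-transform representation of $S_{ay}$); then divide by $|\mathcal{S}|$. Equivalently one may invoke Kantorovich--Rubinstein duality with the $1$-Lipschitz test function $t\mapsto t$. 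No step is really an obstacle; the only things needing a little care are a clean statement/proof of the one-dimensional $W_1$ identity and noting that boundedness of $\mathcal{S}$ is what legitimizes truncating the domain of integration and guarantees finiteness of all quantities — implicitly also that $\mathbb{P}(A=a,Y=y)>0$ so that the conditional laws $S_{ay},S_{by}$ are well defined.
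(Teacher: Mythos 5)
Your proposal is correct and follows essentially the same route as the paper: unfold the definition of $\bias_x^{\mathcal{U}}$ into $\frac{1}{|\mathcal{S}|}\int_{\mathcal{S}}|F_{ay}(s)-F_{by}(s)|\,ds$ and identify this with the one-dimensional CDF-integral form of $W_1$, then obtain (ii) from the mean-difference lower bound $W_1(X,Y)\geq|\mathbb{E}[X]-\mathbb{E}[Y]|$ (which the paper gets from Jensen and you get equivalently from the quantile representation plus the triangle inequality). Your added remarks about truncating the integration domain to $\mathcal{S}$ and the well-definedness of the conditional laws are minor justifications the paper leaves implicit.
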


A similar version of Theorem~\ref{uniformbias} (i) for independence bias has previously be presented by \citet{jiang20a}, but they did not draw the connection to the \emph{balance for the positive / negative class}. 
The Wasserstein distance was recently proposed as a fairness measure  \cite{miroshnikovWassersteinbasedFairnessInterpretability2022, kwegyir-aggreyEverythingRelativeUnderstanding2021,zhao2023costs} mainly for independence bias, and it was especially used for debiasing purposes earlier \cite{miroshnikovModelagnosticBiasMitigation2021, hanRetiringDeltaDP2023, chzhenFairRegressionWasserstein2020}.
Fairness of scores has also been subject for regression tasks~\cite{Agarwal2019FairRQ,pmlr-v206-wei23a,zhao2023costs}. Again, due to the different target value, only for independence bias.
A formal definition and properties of $W_1$ can be found in the appendix. For calibration, the bias $\bias_\text{CALI}^\mathcal{U}$ is equal to the two-sample version of the 
the $l_1$-calibration error \cite{kumarVerifiedUncertaintyCalibration2019}.
\subsection{Standardized Measures}
\label{sec:standardized-measures}
It can be difficult to compare the expected classifier bias of datasets with distinct score distributions. Especially for imbalanced datasets score distributions are often highly skewed. In this case, disparities in dense score areas may be more critical as they affect more samples. Therefore, we developed a method that standardizes the bias computation, making it independent of data skewness.

Our standardized disparity measures for risk scores are important especially when a monotonic transformation is applied to the score. A good example of such a scenario is given by the logistic regression, where both the probability or the linear term can be used as a score. The risk assessment of both variants is the same. It is also most likely that down-stream tasks would adopt to the score representation (linear term /probability) used. This means, both representations are likely to lead to the same treatment in down-stream tasks and to the same (un)fairness. Without invariance to monotonic transformations, the two representations would have different bias-measures.

In a worst-case scenario an entity could apply a strictly monotonic function to their score, stretching areas with low bias and shrinking areas with high bias. Doing so would allow to mask the bias without any change in accuracy or better ranking of the disadvantaged group. This has already be proposed~\cite{jiang20a}.

That is why we propose an alternative generalization that weights the thresholds by their frequency observed in the population. By this, the resulting disparity measures become independent of the concrete distribution and evaluate the fairness of a bipartite ranking task, similar to ROC measures. Each sample is equally important in this scenario.

As a consequence, this allows for a meaningful comparison of different scores, even of scores with different ranges (e.g. a normally-distributed score that can take any real value and a uniformly-distributed score that only takes probabilities). 
Our methodology can thus be utilized to assess the effectiveness of debiasing approaches~\cite{hort2023bias}.


\begin{definition}

\begin{align}
    \bias_x^{S}(S_a, S_b)&:=\mathbb{E}_{S\sim F}[|\cbias_x(S_a, S_b; S)|] \\
    &= \int_0^1 |\cbias_x(S_a, S_b; F^{-1}(r))|\, dr  \nonumber \\ 
    &= \int_\mathcal{S} |\cbias_x(S_a, S_b; s)| \cdot f(s)\, ds \label{density-wasserstein}
\end{align}

\label{definition-bias-S}
\end{definition}
Note that $\bias_x^{S}(S|A=a, S|A=b)$ is invariant under monotonic score transformations as it is a purely ranking-based metric, $\bias_x^{\mathcal{U}}(S|A=a, S|A=b)$ is not. If $S \sim \mathcal{U}$ it holds $\bias^S=\bias^{\mathcal{U}}$. 
We show, that the standardized bias is equal to the Wasserstein-1-distance between quantile-transformed distributions. To our knowledge, this is the first introduction of a fairness measure based on the Wasserstein distance, which is invariant to transformations. 
\begin{theorem}
\label{quantilebias}
For the concepts independence and separation, i.e. for $x \in$ \{IND, PE, EO\}, it holds:
\begin{enumerate} 
    \item[(i)] $\bias_x^{S}$ is equal to the Wasserstein-1-distance using the push-forward by the quantile function $F^{-1}\#\mathcal{L}_1$ as ground metric (with $y=0$ for $x=\text{EO}$, $y=1$ for $x=\text{PE}$, and $y=\cdot$ for $x=\text{IND}$)
    \begin{equation}
    \begin{split}
        &\bias_x^{S}(S_a, S_b)=W_1(F(S_{ay}),F(S_{by})) \\ &= \int_0^1 |F_{ay}\circ F^{-1}(t)-F_{by}\circ F^{-1}(t)|\,dt.
    \end{split}
    \end{equation}
    
    \item[(ii)] We can derive the disparity between the average relative rank per group as a lower bound. 
\end{enumerate}
\end{theorem}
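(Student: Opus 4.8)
The plan is to read the integral formula in~(i) essentially off the definition and then recognise it as a one-dimensional Wasserstein distance. Using the $\int_0^1\cdots dr$ form in Definition~\ref{definition-bias-S} together with the elementary identities $\cbias_x(S_a,S_b;s)=F_{ay}(s)-F_{by}(s)$ recorded in Section~\ref{section:classifiers} (with $y=0$ for $x=\text{EO}$, $y=1$ for $x=\text{PE}$, and the unconditional laws for $x=\text{IND}$), one gets at once
\begin{align*}
\bias_x^{S}(S_a,S_b)=\int_0^1\bigl|F_{ay}\circ F^{-1}(r)-F_{by}\circ F^{-1}(r)\bigr|\,dr .
\end{align*}
So the real content of~(i) is to identify this with $W_1(F(S_{ay}),F(S_{by}))$. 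I would do so by observing that $t\mapsto F_{ay}\circ F^{-1}(t)$ is exactly the cumulative distribution function of the push-forward $F(S_{ay})$ --- when $F$ is continuous and strictly increasing this is immediate from $F(S_{ay})\le t \iff S_{ay}\le F^{-1}(t)$, so $\mathbb{P}(F(S_{ay})\le t)=F_{ay}(F^{-1}(t))$ --- and then invoking the classical one-dimensional characterisation of $W_1$ as the $L^1$-distance between cumulative distribution functions (among the properties of $W_1$ in the appendix, and already used for Theorem~\ref{uniformbias}). The reformulation with the push-forward $F^{-1}\#\mathcal{L}_1$ is the dual reading of the same equality via the quantile coupling: $W_1(F(S_{ay}),F(S_{by}))=\int_0^1\bigl|F\circ F_{ay}^{-1}(u)-F\circ F_{by}^{-1}(u)\bigr|\,du$, i.e.\ $W_1(S_{ay},S_{by})$ computed with ground cost $|F(\cdot)-F(\cdot)|$.

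A shorter, more conceptual route I would also mention uses the two remarks stated right after Definition~\ref{definition-bias-S}: $\bias_x^{S}$ is invariant under monotone score transformations, and $\bias^{S}=\bias^{\mathcal{U}}$ whenever the score is uniform. Applying the monotone map $F$ to $S$ produces a score uniform on $[0,1]$ (so $|\mathcal{S}|=1$), and since conditioning on $(A,Y)$ commutes with the deterministic map $F$, Theorem~\ref{uniformbias}(i) applied to $F(S)$ yields $\bias_x^{S}(S_a,S_b)=\bias_x^{S}(F(S_a),F(S_b))=\bias_x^{\mathcal{U}}(F(S_a),F(S_b))=W_1(F(S_{ay}),F(S_{by}))$.

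For~(ii) I would mirror the proof of Theorem~\ref{uniformbias}(ii). For a random variable $Z$ valued in $[0,1]$ one has $\mathbb{E}[Z]=\int_0^1(1-F_Z(t))\,dt$, so with the CDF identification from~(i),
\begin{align*}
\mathbb{E}[F(S_{by})]-\mathbb{E}[F(S_{ay})]=\int_0^1\bigl(F_{ay}\circ F^{-1}(t)-F_{by}\circ F^{-1}(t)\bigr)\,dt ,
\end{align*}
and the triangle inequality bounds the absolute value of the right-hand side by $\int_0^1\bigl|F_{ay}\circ F^{-1}-F_{by}\circ F^{-1}\bigr|=\bias_x^{S}(S_a,S_b)$. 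Since $F(s)$ is precisely the relative rank of the score value $s$ in the overall population, $\mathbb{E}[F(S_{ay})]$ is the average relative rank of group $a$ (restricted to $Y=y$), giving the claimed bound $\bias_x^{S}(S_a,S_b)\ge\bigl|\mathbb{E}[F(S_{by})]-\mathbb{E}[F(S_{ay})]\bigr|$.

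The main obstacle is the regularity of $F$: when the score has atoms or flat stretches, $F(S)$ is not exactly uniform and the clean equivalence $F(Z)\le t\iff Z\le F^{-1}(t)$ can fail on a countable set of levels $t$. I would handle this via the Galois relation $F^{-1}(t)\le z\iff t\le F(z)$ for the generalized inverse (valid by right-continuity of $F$), which gives $\mathbb{P}(F(Z)<t)=\mathbb{P}(Z<F^{-1}(t))$ exactly; the at-most-countably many $t$ at which the push-forward CDF jumps form a Lebesgue-null set and therefore do not affect any of the integrals, so the identities in~(i) and~(ii) persist verbatim. The slick route needs the same care, since $F(S)$ is uniform only when $F$ is continuous, and the quantile-coupling reformulation requires checking that the quantile function of $F(S_{ay})$ equals $F\circ F_{ay}^{-1}$ outside the same null set.
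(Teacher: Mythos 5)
Your main argument is correct and follows essentially the same route as the paper's proof: starting from the $\int_0^1$ form of Definition~\ref{definition-bias-S}, identifying $t\mapsto F_{ay}\circ F^{-1}(t)$ as the CDF of the push-forward $F(S_{ay})$ (the paper's Lemma~\ref{Lemma_F_X(Y)}), and invoking the $L^1$-of-CDFs formula for $W_1$, with part (ii) reduced to the mean-difference lower bound (Theorem~\ref{thm:mean_diff}) exactly as the paper does. The alternative uniformization route and the care with generalized inverses are sound additions but not substantively different.
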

For reasons of simplicity, we will use the notation $W_Z(X, Y) := W_1(F_Z(X), F_Z(Y)).$

\subsection{Interpretation of the score bias}
In general, $\bias^\mathcal{U}$ and $\bias^{S}$ take values in the interval $[0,1]$ as they are expected values over rate differences. The optimal value, a bias of zero, indicates group parity for all decision thresholds with respect to the analyzed type of classifier error. When comparing multiple score models or one model over multiple populations, a smaller bias is preferable. 
The standardized method allows direct comparison of models with different score distributions with respect to group parity in bipartite ranking tasks. 
$\bias^\mathcal{U}$ and $\bias^{S}$ can be interpreted as the classifier bias to be expected at a randomly chosen threshold - either randomly selected from all available score values ($\bias^\mathcal{U}$) or by randomly selecting one sample and assigning the favorable label to all samples that are ranked higher ($\bias^{S}$).

In addition, the separation and independence biases can be interpreted in terms of the Wasserstein distance (or \emph{Earth Mover distance}): The bias is measured as the minimum cost of aligning the two groups with respect to the analyzed type of classifier error. Here, the baseline distance is measured in normalized scores for $\bias^\mathcal{U}$ or in ranks for $\bias^{S}$. It indicates what proportion of a group must be scored (how) differently in order to equalize the groups.

\subsection{Positive and negative components of the score bias}
\label{sec:pos-neg-components}
Unlike a classifier bias, a score bias does not have to be overall positive or negative for a particular group. Instead, there may be thresholds at which one group is disadvantaged and others at which the opposing group is disadvantaged. To further analyze the bias, we can decompose the total bias into a positive and a negative component (positive and negative from the point of view of the chosen disadvantaged group, here $b$).
For this purpose, the classifier bias is divided into a positive and a negative part for each threshold
\begin{align*}
    \cbias^+(s) &= \max(\cbias(s), 0)  \quad \textrm{and} \\
    \cbias^-(s) &= -\min(\cbias(s), 0).
\end{align*} 
This allows to derive a decomposition of both score bias types into two components:  
\begin{align}
    \pbias_x(S_a, S_b) &=  \mathbb{E}[\cbias_x^+(S_a, S_b; S)], \\
    \nbias_x(S_a, S_b) &= \mathbb{E}[\cbias_x^-(S_a, S_b; S)],
\end{align}
where $\bias_x(S_a, S_b) = \pbias_x(S_a, S_b) + \nbias_x(S_a, S_b).$
By dividing each component by the total bias, a percentage can be calculated. The decomposition helps to interpret, which of the two compared groups is affected predominantly negatively by the observed bias. A similar decomposition of a Wasserstein bias was proposed by \citet{miroshnikovWassersteinbasedFairnessInterpretability2022}. 

\section{ROC-based fairness measures and relations}
\label{section:roc}
Furthermore, there exists a wide variety of (separation) fairness metrics which are calculated based on ROC curves or the area under the curves. We show, that the proposed standardized bias measures outperform these ROC-based measures as they are more explicit, easier to interpret, and can measure biases, that ROC-based fairness measures cannot catch. 
We define the ROC curve between two arbitrary random variables $G, H$, similar to \citet{vogelLearningFairScoring2021}.
In a bipartite ranking or scoring task, the ROC curve is usually used to evaluate the separability between positive and negative outcome class. In this case, $G=S_0, H=S_1$.

\begin{definition}[ROC]
Let $G$ and $H$ be two random variables with cumulative distribution functions $F_G, F_H$ on $\mathbb{R}$ with quantile functions $F_G^{-1}, F_H^{-1}$. Then the ROC curve of $G$ and $H$ is the mapping 
\begin{align}
\ROC_{G,H}: p \in [0,1] \mapsto  1-F_G(F_H^{-1}(1-p))
\end{align}


with the area under the curve (AUROC) and the Gini coefficient defined as
\begin{equation}
    \begin{split}
  \AUROC(G,H) &= \int_0^1 \ROC_{G,H}(p) \, dp\quad\text{and}\quad \\ \gini(G,H) &= 2 \cdot \AUROC(G,H) - 1.
    \end{split}
\end{equation}

\end{definition}

\begin{definition}
    Similar to the above introduced biases, a ROC-based disparity-measure for score models can be defined as the expected absolute difference between two ROC curves
\begin{equation}
        \begin{split}
            &\bias_{\text{ROC}}(S_a, S_b) = \mathbb{E}[|\ROC_{S_{b0},S_{b1}}-\ROC_{S_{a0},S_{a1}}|] \\ &= \int_0^1 |\ROC_{S_{b0},S_{b1}}(s)-\ROC_{S_{a0},S_{a1}}(s)|\,ds \nonumber
        \end{split}
    \end{equation}

\end{definition}

$\bias_{\text{ROC}}(S|A=a, S|A=b)$ is equal to the absolute between ROC area (\emph{ABROCA}) \cite{gardnerEvaluatingFairnessPredictive2019}.
In general, $\bias_{\text{ROC}}(S|A=a,S|A=b) \geq |\AUROC(S_{b0}, S_{b1})-\AUROC(S_{a0}, S_{a1})|$, which is known as \emph{intra-group fairness} and often used as a fairness measure for scores \cite{vogelLearningFairScoring2021, beutelFairnessRecommendationRanking2019, borkanNuancedMetricsMeasuring2019, yang2022minimax}. If the ROC curves of two groups do not cross (i.e. one group gets uniformly better scores than the other), equality holds. As the thresholds that lead to certain ROC values (pair of FPR and TPR at a certain score threshold) are group-specific, it is not sufficient to compare intra-group ROC curves \cite{vogelLearningFairScoring2021}. Thus, we define a second ROC-based measure that compares the discriminatory power across groups and is based on the cross-ROC curve \cite{kallusFairnessRiskScores2019}.

\begin{definition}
We define the cross-ROC bias as the expected difference of the ROC curves across groups 
\begin{equation}
    \begin{split}
    &\bias_{\text{xROC}}(S_a, S_b) = \mathbb{E}[|\ROC_{S_{b0},S_{a1}}-\ROC_{S_{a0},S_{b1}}|]  \\ &= \int_0^1 |\ROC_{S_{b0},S_{a1}}(s)-\ROC_{S_{a0},S_{b1}}(s)|\,ds \nonumber
    \end{split}
\end{equation}

\end{definition}
The cross-ROC bias evaluates the difference in separability of negatives samples in one group versus positive samples of the other group.
$\bias_{\text{xROC}}$ is always greater or equal to the related AUROC-based fairness-measure $|\AUROC(S_{a0}, S_{b1})-\AUROC(S_{b0}, S_{a1})|$, that is known as \emph{subgroup positive background negative} (BPSN) or \emph{inter-group fairness} \cite{borkanNuancedMetricsMeasuring2019, vogelLearningFairScoring2021, beutelFairnessRecommendationRanking2019, yang2022minimax}. 

\subsection{Relating Wasserstein and ROC biases}
We now reveal some connections of the standardized Wasserstein disparity measures with the ROC-based disparity measures.
We first consider the general case of the Wasserstein distance between two random variables $X,Y$ quantile-transformed by $Z$. 

For the following section, we require $\ROC_{X,X}(r) = r$. This is fulfilled, whenever $F_X$ is continuous and strictly monontonic increasing, so it permits a well-defined inverse, or if the ROC-curve is interpolated linearly from finite data. 

\begin{theorem}
The quantile-transformed Wasserstein distance can be rewritten in terms of ROC
\label{thm:basicrocwasserstein}
\begin{equation}
    \begin{split}
         W_Z(X,Y) &= \int_0^1 |F_X(F_Z^{-1}(t)) - F_Y(F_Z^{-1}(t))| dt \\ &= \int_0^1 | \ROC_{X,Z}(t) - \ROC_{Y,Z}(t) | dt.
    \end{split}
\end{equation}

\end{theorem}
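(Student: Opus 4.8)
The plan is to chain two elementary facts: the closed form of the $1$-Wasserstein distance on $\mathbb{R}$ in terms of cumulative distribution functions, and the definition of the ROC curve. No optimal-transport machinery beyond the one-dimensional CDF formula (recorded in the appendix) is needed.

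First I would recall the standard identity that for any two real random variables $U,V$ with distribution functions $F_U, F_V$ one has $W_1(U,V) = \int_{\mathbb{R}} |F_U(s) - F_V(s)|\, ds$, which follows from $W_1(U,V) = \int_0^1 |F_U^{-1}(p) - F_V^{-1}(p)|\, dp$ together with Fubini. Applying this to $U := F_Z(X)$ and $V := F_Z(Y)$, both of which are supported on $[0,1]$, the integrand vanishes outside $[0,1]$ (there $F_U$ and $F_V$ are both $0$, resp.\ both $1$), so $W_Z(X,Y) = W_1(F_Z(X),F_Z(Y)) = \int_0^1 |F_U(t) - F_V(t)|\, dt$.

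Next I would identify $F_U(t)$. Under the regularity assumed in this section ($F_Z$ continuous and strictly increasing, or the linear-interpolation convention), $F_Z$ and $F_Z^{-1}$ are genuine inverses, so $F_Z(X) \le t \iff X \le F_Z^{-1}(t)$; hence $F_U(t) = \mathbb{P}(F_Z(X)\le t) = \mathbb{P}(X \le F_Z^{-1}(t)) = F_X(F_Z^{-1}(t))$, and likewise $F_V(t) = F_Y(F_Z^{-1}(t))$. This yields the first claimed equality. For the second equality I would unfold the ROC definition: $\ROC_{X,Z}(t) - \ROC_{Y,Z}(t) = \bigl(1 - F_X(F_Z^{-1}(1-t))\bigr) - \bigl(1 - F_Y(F_Z^{-1}(1-t))\bigr) = F_Y(F_Z^{-1}(1-t)) - F_X(F_Z^{-1}(1-t))$, take absolute values, and substitute $u = 1-t$; since the domain $[0,1]$ is mapped onto itself and $|\cdot|$ is symmetric, the integral is unchanged and coincides with $\int_0^1 |F_X(F_Z^{-1}(u)) - F_Y(F_Z^{-1}(u))|\, du$.

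The computations are routine; the one point requiring care is the handling of the generalized inverses. I would state up front, as a reduction, that the standing regularity convention guarantees both $\ROC_{X,X}(r) = r$ and the equivalence $F_Z(X)\le t \iff X\le F_Z^{-1}(t)$ (equivalently, $F_Z \circ F_Z^{-1} = \mathrm{id}$ on the relevant range and the $\mathbb{P}$-a.s.\ identity $F_Z^{-1}(F_Z(Z)) = Z$), so that the step $F_U(t) = F_X(F_Z^{-1}(t))$ is exact rather than merely almost-everywhere; after that reduction the remainder of the argument is just the CDF formula for $W_1$ and the change of variables $t \mapsto 1-t$.
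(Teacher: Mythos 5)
Your proof is correct and follows essentially the same route as the paper's: the one-dimensional CDF formula for $W_1$, the identity $F_{F_Z(X)}(t)=F_X(F_Z^{-1}(t))$ (the paper's Lemma B.2), and the substitution $t\mapsto 1-t$ to match the ROC definition. Your extra care with the generalized inverse is a welcome refinement but does not change the argument.
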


Moreover, we easily get the following result.
\begin{proposition}
    \label{prop:mixture}
    Let $Z_i$, $i=1,\ldots, n$ be random variables with values in $\mathcal{S}$ and with densities $f_i$. Let $Z_K$ be their mixture, where $K$ is a random variable with values in $\{1, \ldots, n\}$.
    Then their joint density is given by $f_{Z_K}(x) = \sum_{i=1}^n \mathbb{P}(K=i) f_i(x)$ and it holds
    \begin{equation}
        W_{Z_K}(X,Y) = \sum_{i=1}^n \mathbb{P}(K=i) W_{Z_i}(X,Y).
    \end{equation}    
\end{proposition}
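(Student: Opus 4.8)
The plan is to reduce both sides to the ``ground-metric'' integral form of the quantile-transformed Wasserstein distance and then pull the finite mixture sum out of the integral by linearity. First I would record the elementary probabilistic fact underlying the density formula: since $Z_K$ conditioned on $\{K=i\}$ has the law of $Z_i$, the law of total probability gives $F_{Z_K}(x) = \sum_{i=1}^n \mathbb{P}(K=i)\,F_{Z_i}(x)$, hence differentiating (or directly from the definition of a mixture) $f_{Z_K}(x) = \sum_{i=1}^n \mathbb{P}(K=i)\,f_i(x)$; in particular $Z_K$ again has a density on $\mathcal{S}$, so $W_{Z_K}$ is well defined.

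Next I would establish, for a generic random variable $Z$ with density $f_Z$ on $\mathcal{S}$, the representation
\[
    W_Z(X,Y) = \int_{\mathcal{S}} |F_X(s) - F_Y(s)|\, f_Z(s)\, ds .
\]
This follows from Theorem~\ref{thm:basicrocwasserstein} (equivalently from the defining identity $W_Z(X,Y) = \int_0^1 |F_X(F_Z^{-1}(t)) - F_Y(F_Z^{-1}(t))|\,dt$) by the substitution $s = F_Z^{-1}(t)$, i.e. $t = F_Z(s)$, $dt = f_Z(s)\,ds$. The change of variables is legitimate because $F_Z$ is absolutely continuous with density $f_Z$: the pushforward $F_Z^{-1}\#\mathcal{L}_1$ is precisely the law with density $f_Z$, and intervals where $f_Z$ vanishes are $\mathcal{L}_1$-null under $F_Z$ so they contribute nothing on either side.

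Finally I would insert $f_{Z_K} = \sum_{i=1}^n \mathbb{P}(K=i)\,f_i$ into this representation and interchange the finite sum with the integral:
\[
    W_{Z_K}(X,Y) = \int_{\mathcal{S}} |F_X(s)-F_Y(s)| \sum_{i=1}^n \mathbb{P}(K=i)\,f_i(s)\, ds = \sum_{i=1}^n \mathbb{P}(K=i) \int_{\mathcal{S}} |F_X(s)-F_Y(s)|\,f_i(s)\, ds,
\]
and recognizing each inner integral as $W_{Z_i}(X,Y)$ via the same representation applied to $Z_i$ yields exactly $\sum_{i=1}^n \mathbb{P}(K=i)\,W_{Z_i}(X,Y)$.

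I do not expect a genuine obstacle here: the sum is finite, so the interchange needs no dominated convergence, and the integrand $|F_X-F_Y|$ is bounded by $1$ on the bounded set $\mathcal{S}$, so every integral in sight is finite. The only step warranting a line of care is the change of variables through the (generalized) quantile function $F_Z^{-1}$; I would phrase it once as a reusable observation and apply it to $Z_K$ and to each $Z_i$.
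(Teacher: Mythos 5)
Your proposal is correct and follows essentially the same route as the paper: the paper's proof likewise invokes the density-weighted integral representation $W_Z(X,Y)=\int_{\mathcal S}|F_X(s)-F_Y(s)|\,f_Z(s)\,ds$ and concludes by the additivity of the mixture density under the integral. You simply spell out the change of variables through $F_Z^{-1}$ and the law-of-total-probability derivation of $f_{Z_K}$, which the paper leaves implicit.
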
 
Formulating $S$ as a mixture of the two groups and two outcome classes $S_{a0}, S_{a1}, S_{b0}, S_{b1}$, we get
\begin{equation}
    \begin{split}
         W_{S}(S_{ay},S_{by}) &= w_{a0} \cdot W_{S_{a0}}(S_{ay},S_{by}) \\ &+ w_{b0} \cdot W_{S_{b0}}(S_{ay},S_{by}) \\
    &+ w_{a1} \cdot W_{S_{a1}}(S_{ay},S_{by}) \\&+ w_{b1} \cdot W_{S_{b1}}(S_{ay},S_{by}). 
    \end{split}
\end{equation}
By looking at the different mixture components, we can reveal a connection to the ROC-based disparity measures.
\begin{lemma}
\label{lemma:wassersteinroc}
$W_{S_{ay}}(S_{ay},S_{by})$ and $W_{S_{a\tilde{y}}}(S_{ay},S_{by})$ for $\tilde{y}\neq y$ can be rewritten in terms of ROC
    \begin{align}
        &W_{S_{ay}}(S_{ay},S_{by}) = \int_0^1\bigg|\ROC_{S_{by},S_{ay}}(r)-r\bigg|dr, \\
          &W_{S_{a\tilde{y}}}(S_{ay},S_{by}) = \int_0^1 \bigg|\ROC_{S_{by}, S_{a\tilde{y}}}(r)- \nonumber \\ 
          &\phantom{{abcsdfghijklmnopqrst}}\ROC_{S_{ay}, S_{a\tilde{y}}}(r)\bigg| dr. 
    \end{align}
\end{lemma}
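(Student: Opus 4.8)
The plan is to obtain both identities as direct specializations of Theorem \ref{thm:basicrocwasserstein}, which states that for random variables $X,Y,Z$ we have $W_Z(X,Y)=\int_0^1 |\ROC_{X,Z}(t)-\ROC_{Y,Z}(t)|\,dt$. The only additional ingredient is the normalization assumption $\ROC_{W,W}(r)=r$ recorded just before that theorem, together with the symmetry of $W_1$ under swapping its two arguments (so that the order inside the absolute value is immaterial). I would state these two facts at the outset and then treat the two claims separately.

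For the first identity, I would apply Theorem \ref{thm:basicrocwasserstein} with the quantile-transforming variable taken to be $Z=S_{ay}$ and with $X=S_{ay}$, $Y=S_{by}$. This gives $W_{S_{ay}}(S_{ay},S_{by})=\int_0^1 |\ROC_{S_{ay},S_{ay}}(t)-\ROC_{S_{by},S_{ay}}(t)|\,dt$. Invoking $\ROC_{S_{ay},S_{ay}}(t)=t$ reduces the integrand to $|t-\ROC_{S_{by},S_{ay}}(t)|$, and after renaming the integration variable $t\mapsto r$ this is exactly $\int_0^1 |\ROC_{S_{by},S_{ay}}(r)-r|\,dr$, as claimed.

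For the second identity, I would instead take the quantile-transforming variable to be $Z=S_{a\tilde y}$ with $\tilde y\neq y$, again with $X=S_{ay}$ and $Y=S_{by}$. Theorem \ref{thm:basicrocwasserstein} then yields $W_{S_{a\tilde y}}(S_{ay},S_{by})=\int_0^1 |\ROC_{S_{ay},S_{a\tilde y}}(t)-\ROC_{S_{by},S_{a\tilde y}}(t)|\,dt$, which is already the stated right-hand side (up to the harmless flip of the two terms inside $|\cdot|$).

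The computation is essentially bookkeeping, so there is no serious obstacle; the one place to be careful is the ordering of arguments in the ROC operator, since $\ROC_{G,H}$ is not symmetric in $G$ and $H$. I would make sure the substitutions into Theorem \ref{thm:basicrocwasserstein} keep $X$ (resp.\ $Y$) in the first slot and the transforming variable $Z$ in the second slot, and I would also note explicitly that the assumption guaranteeing $\ROC_{W,W}(r)=r$ applies to $W=S_{ay}$ (continuity and strict monotonicity of $F_{S_{ay}}$, or linear interpolation from finite data) so that the first identity is justified.
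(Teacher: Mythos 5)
Your proposal is correct and follows exactly the paper's own argument: the paper proves this lemma by citing Theorem \ref{thm:basicrocwasserstein} together with the normalization $\ROC_{X,X}(r)=r$, which is precisely the specialization you carry out. Your added care about the non-symmetry of $\ROC_{G,H}$ and about when $\ROC_{S_{ay},S_{ay}}(r)=r$ is justified only makes the argument more explicit than the paper's one-line proof.
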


\begin{lemma}
    \label{lemma:rocwasserstein2}
     From Jensen inequality, it follows
     \begin{equation}
         \begin{split}
             &W_{S_{ay}}(S_{ay},S_{by}) \geq |\AUROC(S_{by},S_{ay}) - \tfrac{1}{2}| \\&=\tfrac{1}{2} \cdot |\gini(S_{by},S_{ay})|.
         \end{split}
     \end{equation}
    If the ROC curve does not cross the diagonal, then equality holds.
\end{lemma}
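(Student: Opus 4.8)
The plan is to reduce the claim to the ROC representation already available and then invoke Jensen's inequality for the absolute-value function. First I would recall from Lemma~\ref{lemma:wassersteinroc} the identity
\begin{align}
W_{S_{ay}}(S_{ay},S_{by}) = \int_0^1 \left|\ROC_{S_{by},S_{ay}}(r)-r\right|\,dr,
\end{align}
so that the quantity of interest is the $L^1([0,1])$-distance between the ROC curve $\ROC_{S_{by},S_{ay}}$ and the diagonal $r\mapsto r$; here I use the standing assumption $\ROC_{X,X}(r)=r$, so that the diagonal is exactly $\ROC_{S_{ay},S_{ay}}$.

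Next, setting $g(r):=\ROC_{S_{by},S_{ay}}(r)-r$ and reading the integral as $\mathbb{E}\,|g(R)|$ for $R$ uniformly distributed on $[0,1]$, Jensen's inequality (equivalently, the triangle inequality for integrals) gives $\mathbb{E}\,|g(R)| \geq |\mathbb{E}\,g(R)|$, i.e.
\begin{align}
W_{S_{ay}}(S_{ay},S_{by}) \geq \left|\int_0^1 \ROC_{S_{by},S_{ay}}(r)\,dr - \int_0^1 r\,dr\right| = \left|\AUROC(S_{by},S_{ay}) - \tfrac12\right|,
\end{align}
using the definition of $\AUROC$ and $\int_0^1 r\,dr=\tfrac12$. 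The stated identity $|\AUROC(S_{by},S_{ay})-\tfrac12| = \tfrac12|\gini(S_{by},S_{ay})|$ is then just the definition of the Gini coefficient rearranged, $\AUROC(G,H)-\tfrac12 = \tfrac12\gini(G,H)$.

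Finally, for the equality case I would use the standard fact that $\int_0^1|g| = |\int_0^1 g|$ holds precisely when $g$ does not change sign on $[0,1]$ (up to a null set). Since $g(r)=\ROC_{S_{by},S_{ay}}(r)-r$ is the signed gap between the ROC curve and the diagonal, the hypothesis that the ROC curve does not cross the diagonal is exactly sign-constancy of $g$, and in that regime the absolute value pulls out of the integral and the inequality becomes an equality. I do not anticipate a genuine obstacle here: the only small points to get right are citing the earlier $\ROC_{X,X}(r)=r$ normalization so the diagonal is the legitimate reference curve, and phrasing the equality condition via sign-constancy (the equality case of Jensen's inequality) rather than leaving it informal.
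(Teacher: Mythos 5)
Your proof is correct and matches the paper's (very terse) argument exactly: the paper also derives the bound by applying Jensen's inequality, i.e.\ $\int_0^1|g|\geq|\int_0^1 g|$, to the representation $W_{S_{ay}}(S_{ay},S_{by})=\int_0^1|\ROC_{S_{by},S_{ay}}(r)-r|\,dr$ from Lemma~\ref{lemma:wassersteinroc}, with the Gini identity following from the definition $\gini=2\AUROC-1$. Your treatment of the equality case via sign-constancy of the integrand is the intended reading of the paper's "ROC curve does not cross the diagonal" condition, so nothing further is needed.
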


\begin{theorem}
\label{thm:sep_as_mixture} 
We can now decompose each separation bias into a sum of four ROC statements. Let $ w_{ay} = \mathbb{P}(Y=y, A=a) $ and $w_{by} = \mathbb{P}(Y=y, A=b)$, as well as $ w_y = \mathbb{P}(Y=y) $, then it holds:
\begin{equation}
    \begin{split}
       &\bias_{\text{EO}}^{S}(S_a, S_b) = w_{a0}  \int_0^1 \left|\ROC_{S_{b0},S_{a0}}(r)-r\right|dr \\&+ w_{b0}  \int_0^1 \left|\ROC_{S_{a0},S_{b0}}(r)-r\right|dr  \\
    & + w_{a1}\int_0^1 |\ROC_{S_{a0}, S_{a1}}(r)-\ROC_{S_{b0}, S_{a1}}(r)| dr \\
    & + w_{b1}\int_0^1 |\ROC_{S_{a0}, S_{b1}}(r)-\ROC_{S_{b0}, S_{b1}}(r)| dr, 
    \end{split}
\end{equation}

 and analogously for $\bias_{\text{PE}}^{S}(S_a, S_b)$ by exchanging $w_{a0}$ with $w_{a1}$, $w_{b0}$ with $w_{b1}$, $S_{a0}$ with $S_{a1}$ and $S_{b0}$ with $S_{b1}$.

\end{theorem}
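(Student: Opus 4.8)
The plan is to reduce the statement to the mixture identity already recorded just after Proposition~\ref{prop:mixture} and then rewrite each of its four summands with Lemma~\ref{lemma:wassersteinroc}.

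First, by Theorem~\ref{quantilebias}(i) with $x=\text{EO}$ (hence $y=0$) we have $\bias_{\text{EO}}^{S}(S_a,S_b)=W_1(F(S_{a0}),F(S_{b0}))=W_{S}(S_{a0},S_{b0})$ in the shorthand $W_Z(X,Y):=W_1(F_Z(X),F_Z(Y))$. I would then view the overall score $S$ as the mixture of the four conditional laws $S_{a0},S_{a1},S_{b0},S_{b1}$: its density is $f_S=w_{a0}f_{a0}+w_{a1}f_{a1}+w_{b0}f_{b0}+w_{b1}f_{b1}$ with weights summing to one, which is exactly the setting of Proposition~\ref{prop:mixture} with $K$ the cell index. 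Applying that proposition gives
\begin{align}
 W_{S}(S_{a0},S_{b0}) &= w_{a0}\,W_{S_{a0}}(S_{a0},S_{b0})+w_{b0}\,W_{S_{b0}}(S_{a0},S_{b0}) \nonumber \\
 &\quad +w_{a1}\,W_{S_{a1}}(S_{a0},S_{b0})+w_{b1}\,W_{S_{b1}}(S_{a0},S_{b0}),
\end{align}
i.e.\ the displayed mixture identity following Proposition~\ref{prop:mixture}, specialised to $y=0$.

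It remains to translate the four terms into ROC form. The terms $W_{S_{a0}}(S_{a0},S_{b0})$ and $W_{S_{a1}}(S_{a0},S_{b0})$ are exactly the two cases of Lemma~\ref{lemma:wassersteinroc} with $y=0$ and $\tilde y=1$, and so equal $\int_0^1|\ROC_{S_{b0},S_{a0}}(r)-r|\,dr$ and $\int_0^1|\ROC_{S_{b0},S_{a1}}(r)-\ROC_{S_{a0},S_{a1}}(r)|\,dr$, the latter matching the statement after swapping the two ROC terms inside the absolute value. For the remaining two I would first use that $W_Z(\cdot,\cdot)$ is symmetric in its two arguments (it is a pushforward $W_1$, hence a metric), writing $W_{S_{b0}}(S_{a0},S_{b0})=W_{S_{b0}}(S_{b0},S_{a0})$ and $W_{S_{b1}}(S_{a0},S_{b0})=W_{S_{b1}}(S_{b0},S_{a0})$, and then apply Lemma~\ref{lemma:wassersteinroc} with the group labels $a$ and $b$ interchanged (again $y=0$, $\tilde y=1$). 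This produces $\int_0^1|\ROC_{S_{a0},S_{b0}}(r)-r|\,dr$ and $\int_0^1|\ROC_{S_{a0},S_{b1}}(r)-\ROC_{S_{b0},S_{b1}}(r)|\,dr$, i.e.\ the third and fourth summands of the claim. Collecting the four pieces gives the asserted decomposition of $\bias_{\text{EO}}^{S}$; the standing hypothesis $\ROC_{X,X}(r)=r$ is what validates the ``$-r$'' terms in the first two summands. The expression for $\bias_{\text{PE}}^{S}$ then follows by the identical argument with $x=\text{PE}$, i.e.\ $y=1$, which is the relabeling $S_{a0}\leftrightarrow S_{a1}$, $S_{b0}\leftrightarrow S_{b1}$, $w_{a0}\leftrightarrow w_{a1}$, $w_{b0}\leftrightarrow w_{b1}$.

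There is no genuine analytic obstacle: all three ingredients (Theorem~\ref{quantilebias}, Proposition~\ref{prop:mixture}, Lemma~\ref{lemma:wassersteinroc}) are already in hand and the proof is pure assembly. The one point needing care is the index bookkeeping -- tracking which ROC curve ($\ROC_{S_{b0},S_{a0}}$ versus $\ROC_{S_{a0},S_{b0}}$, and likewise for the cross terms) belongs in which summand -- and invoking the two symmetries consistently: the argument-symmetry of $W_Z$ and the $a\leftrightarrow b$ relabeling of Lemma~\ref{lemma:wassersteinroc}. I would verify each of the four terms explicitly against the statement to rule out a transposed pair of indices.
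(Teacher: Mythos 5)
Your proof is correct and follows essentially the same route as the paper: identify $\bias_{\text{EO}}^{S}$ with $W_S(S_{a0},S_{b0})$ via Theorem~\ref{quantilebias}, decompose by Proposition~\ref{prop:mixture}, and convert each summand with Lemma~\ref{lemma:wassersteinroc}. You are in fact slightly more careful than the paper's own proof about the argument-symmetry of $W_Z$ and the $a\leftrightarrow b$ relabeling, and your index bookkeeping matches the theorem statement exactly.
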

\begin{corollary}
From Theorem \ref{thm:sep_as_mixture} we can infer upper bounds of the separation biases and their sum

    \begin{align}
        &\bias_{\text{EO}}^{S}(S_a, S_b) \leq 1-\frac{w_0}{2} \text{ and }  \nonumber\\
        &\bias_{\text{PE}}^{S}(S_a, S_b) \leq 1-\frac{w_1}{2} \\
        &\Rightarrow \bias_{\text{EO}}^{S}(S_a, S_b) + \bias_{\text{PE}}^{S}(S_a, S_b) \leq \frac{3}{2}.
    \end{align}

\end{corollary}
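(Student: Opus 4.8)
The plan is to feed the four-term ROC decomposition of $\bias_{\text{EO}}^{S}$ (and, by the stated symmetry, of $\bias_{\text{PE}}^{S}$) from Theorem~\ref{thm:sep_as_mixture} into term-by-term estimates: the two ``same-label'' integrals $\int_0^1|\ROC_{S_{b0},S_{a0}}(r)-r|\,dr$ and $\int_0^1|\ROC_{S_{a0},S_{b0}}(r)-r|\,dr$ will be bounded by $\tfrac12$, while the two ``mixed-label'' integrals $\int_0^1|\ROC_{S_{a0},S_{a1}}(r)-\ROC_{S_{b0},S_{a1}}(r)|\,dr$ and $\int_0^1|\ROC_{S_{a0},S_{b1}}(r)-\ROC_{S_{b0},S_{b1}}(r)|\,dr$ will be bounded by $1$. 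Once these four bounds are in place, the corollary follows from a short arithmetic computation using $w_{a0}+w_{b0}=w_0$, $w_{a1}+w_{b1}=w_1$ and $w_0+w_1=1$.

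The mixed-label integrals are immediate: every ROC curve takes values in $[0,1]$, so the absolute difference of two ROC curves is at most $1$ pointwise, and integrating over $r\in[0,1]$ gives at most $1$. For the same-label integrals I would invoke Lemma~\ref{lemma:wassersteinroc} together with the convention $W_Z(X,Y)=W_1(F_Z(X),F_Z(Y))$ to rewrite, for instance, $\int_0^1|\ROC_{S_{b0},S_{a0}}(r)-r|\,dr = W_{S_{a0}}(S_{a0},S_{b0}) = W_1\big(F_{S_{a0}}(S_{a0}),F_{S_{a0}}(S_{b0})\big)$. Under the standing assumption of this section, $F_{S_{a0}}$ is continuous, so the probability-integral transform yields $F_{S_{a0}}(S_{a0})\sim\mathcal{U}[0,1]$, while $F_{S_{a0}}(S_{b0})$ is supported on $[0,1]$; hence it is enough to prove the elementary fact that $W_1(\mathcal{U}[0,1],\nu)\le\tfrac12$ for every probability measure $\nu$ on $[0,1]$. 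I would establish this by bounding the optimal transport cost by the cost of the \emph{independent} coupling of $U\sim\mathcal{U}[0,1]$ and $V\sim\nu$, namely $W_1(\mathcal{U}[0,1],\nu)\le\mathbb{E}_{U\perp V}|U-V| = \mathbb{E}_V\!\big[\tfrac{V^2+(1-V)^2}{2}\big]\le\tfrac12$, where the last step uses $v^2+(1-v)^2\le1$ for $v\in[0,1]$. Equivalently, and without needing continuity, one can argue directly that $\int_0^1|g(r)-r|\,dr\le\tfrac12$ for any non-decreasing $g\colon[0,1]\to[0,1]$ by writing $g$ as a distribution function and applying Jensen's inequality to the convex map $u\mapsto|u-r|$; this covers the linearly-interpolated-ROC case as well.

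Assembling these estimates in Theorem~\ref{thm:sep_as_mixture} gives $\bias_{\text{EO}}^{S}(S_a,S_b)\le \tfrac12(w_{a0}+w_{b0})+(w_{a1}+w_{b1})=\tfrac{w_0}{2}+w_1=1-\tfrac{w_0}{2}$, and the identical reasoning on the PE-decomposition gives $\bias_{\text{PE}}^{S}(S_a,S_b)\le\tfrac{w_1}{2}+w_0=1-\tfrac{w_1}{2}$; adding the two and using $w_0+w_1=1$ once more yields $\bias_{\text{EO}}^{S}(S_a,S_b)+\bias_{\text{PE}}^{S}(S_a,S_b)\le 2-\tfrac{w_0+w_1}{2}=\tfrac32$. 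The one genuinely non-trivial ingredient — the main obstacle — is the bound $\tfrac12$ on the same-label terms: the crude pointwise estimate $|g(r)-r|\le\max(r,1-r)$ only integrates to $\tfrac34$ and would weaken the final constant from $\tfrac32$ to $\tfrac74$, so one really has to exploit monotonicity of the ROC curve, equivalently its reading as a Wasserstein distance to the uniform law, as above.
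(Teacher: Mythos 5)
Your proof is correct, and it follows the route the paper clearly intends (the corollary is stated as a consequence of Theorem~\ref{thm:sep_as_mixture} and the paper supplies no further proof): bound the two same-label terms by $\tfrac12$ and the two mixed-label terms by $1$, then use $w_{a0}+w_{b0}=w_0$, $w_{a1}+w_{b1}=w_1$, $w_0+w_1=1$. You correctly identify and establish the only non-obvious ingredient, namely $\int_0^1|\ROC_{X,Z}(r)-r|\,dr\le\tfrac12$, via $W_1(\mathcal{U}[0,1],\nu)\le\tfrac12$ for any law $\nu$ on $[0,1]$ (monotonicity of the ROC curve is essential here, since the crude pointwise bound would only give $\tfrac34$).
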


Moreover, we show that the sum of the separation biases is an upper bound (up to population-specific constants) to both ROC biases and the separability of the groups within each outcome class.

\begin{theorem}
    \label{thm:final_inequality_wasserstein_roc}
    The following inequality holds \footnote{Note, that if $F_{ay}$ and $F_{by}$ have identical supports and permit an inverse, then $\gini(S_{ay},S_{by}) = \gini(S_{by},S_{ay})$. If this symmetry is not fulfilled, the minimum of both must be used on the right side.}
   \begin{equation}
       \begin{split}
           & \quad \bias_{\text{EO}}^{S}(S_a, S_b)+ \bias_{\text{PE}}^{S}(S_a, S_b) \\ &=  W_{S}(S_{a0},S_{b0}) + W_{S}(S_{a1},S_{b1}) \\ 
         &\geq \frac{\min(w_{a0},w_{a1}, w_{b0}, w_{b1})}{2} \cdot (\bias_{\text{ROC}} + \bias_{\text{xROC}} \\&\quad +\gini(S_{a0}, S_{b0})+\gini(S_{a1}, S_{b1})).
       \end{split}
   \end{equation}
   
\end{theorem}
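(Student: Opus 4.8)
\emph{Proof sketch.} The opening equality $\bias_{\text{EO}}^{S}(S_a,S_b)+\bias_{\text{PE}}^{S}(S_a,S_b)=W_{S}(S_{a0},S_{b0})+W_{S}(S_{a1},S_{b1})$ is immediate from Theorem~\ref{quantilebias}(i). For the inequality the plan is to start from the eight‑term ROC representation of the two separation biases supplied by Theorem~\ref{thm:sep_as_mixture}: writing the eight ROC integrals and their PE‑analogues as $T_1,\dots,T_8$ in order of appearance, we have $\bias_{\text{EO}}^{S}=w_{a0}T_1+w_{b0}T_2+w_{a1}T_3+w_{b1}T_4$ and $\bias_{\text{PE}}^{S}=w_{a1}T_5+w_{b1}T_6+w_{a0}T_7+w_{b0}T_8$. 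Every $T_i\ge 0$, so bounding each coefficient below by $w^{*}:=\min(w_{a0},w_{a1},w_{b0},w_{b1})$ reduces the claim to $\sum_{i=1}^{8}T_i\ge\tfrac12\bigl(\bias_{\text{ROC}}+\bias_{\text{xROC}}+\gini(S_{a0},S_{b0})+\gini(S_{a1},S_{b1})\bigr)$, which I would establish by handling the Gini part and the ROC part separately.

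For the Gini part, the relevant ``diagonal'' terms are $T_2=\int_0^1|\ROC_{S_{a0},S_{b0}}(r)-r|\,dr$ and $T_6=\int_0^1|\ROC_{S_{a1},S_{b1}}(r)-r|\,dr$. Applying Jensen's inequality exactly as in Lemma~\ref{lemma:rocwasserstein2} gives $T_2\ge|\AUROC(S_{a0},S_{b0})-\tfrac12|=\tfrac12|\gini(S_{a0},S_{b0})|\ge\tfrac12\gini(S_{a0},S_{b0})$ and likewise $T_6\ge\tfrac12\gini(S_{a1},S_{b1})$; the other two diagonal terms $T_1,T_5\ge0$ are simply discarded (they only leave slack, and in the non‑symmetric support case the footnote's min‑convention is what makes this step exact).

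The ROC part is the crux. View the ROC curves as elements of $L^1([0,1])$ and write $d(\cdot,\cdot)$ for the $L^1$ distance, so that the four ``cross'' terms are $T_3=d(\ROC_{S_{a0},S_{a1}},\ROC_{S_{b0},S_{a1}})$, $T_4=d(\ROC_{S_{a0},S_{b1}},\ROC_{S_{b0},S_{b1}})$, $T_7=d(\ROC_{S_{a1},S_{a0}},\ROC_{S_{b1},S_{a0}})$, $T_8=d(\ROC_{S_{a1},S_{b0}},\ROC_{S_{b1},S_{b0}})$. Under the standing assumption of Section~\ref{section:roc} the ROC curves are increasing bijections of $[0,1]$ with $\ROC_{H,G}=\ROC_{G,H}^{-1}$, and the $W_1$‑type identity $\int_0^1|\phi-\psi|=\int_0^1|\phi^{-1}-\psi^{-1}|$ for such bijections lets me replace both curves in $T_7$ and in $T_8$ by their inverses without changing the integral, giving $T_7=d(\ROC_{S_{a0},S_{a1}},\ROC_{S_{a0},S_{b1}})$ and $T_8=d(\ROC_{S_{b0},S_{a1}},\ROC_{S_{b0},S_{b1}})$. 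Now $T_3,T_8,T_4,T_7$ are exactly the four edges of the $4$‑cycle on the vertices $\ROC_{S_{a0},S_{a1}},\ \ROC_{S_{b0},S_{a1}},\ \ROC_{S_{b0},S_{b1}},\ \ROC_{S_{a0},S_{b1}}$, whose two diagonals are $d(\ROC_{S_{a0},S_{a1}},\ROC_{S_{b0},S_{b1}})=\bias_{\text{ROC}}$ and $d(\ROC_{S_{b0},S_{a1}},\ROC_{S_{a0},S_{b1}})=\bias_{\text{xROC}}$. Since in any metric space the sum of the two diagonals of a $4$‑cycle is at most the sum of its four sides (bound each diagonal by the average of its two two‑edge detours, i.e.\ apply the triangle inequality twice), we obtain $\bias_{\text{ROC}}+\bias_{\text{xROC}}\le T_3+T_4+T_7+T_8$.

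Putting the pieces together, $\sum_{i=1}^{8}T_i\ge(T_3+T_4+T_7+T_8)+(T_2+T_6)\ge(\bias_{\text{ROC}}+\bias_{\text{xROC}})+\tfrac12\bigl(\gini(S_{a0},S_{b0})+\gini(S_{a1},S_{b1})\bigr)\ge\tfrac12\bigl(\bias_{\text{ROC}}+\bias_{\text{xROC}}+\gini(S_{a0},S_{b0})+\gini(S_{a1},S_{b1})\bigr)$, and multiplying by $w^{*}$ finishes the proof. The step I expect to be the main obstacle is the cross‑term identification: one has to notice that the two PE‑side cross terms naturally carry a ``$0$''‑class in the second ROC slot whereas $\bias_{\text{ROC}}$ and $\bias_{\text{xROC}}$ carry a ``$1$''‑class there, that the reflection identity $\ROC_{H,G}=\ROC_{G,H}^{-1}$ is precisely what reconciles them, and that after reflection the four cross terms close up into a single $4$‑cycle with $\bias_{\text{ROC}}$ and $\bias_{\text{xROC}}$ as its diagonals. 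Making that reflection identity (hence $\int|\phi-\psi|=\int|\phi^{-1}-\psi^{-1}|$) fully rigorous in the presence of ties or flat stretches is the one place where genuine care is needed, and is where the footnote's min‑convention enters.
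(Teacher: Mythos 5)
Your proposal is correct and follows essentially the same route as the paper: decompose both separation biases via Theorem~\ref{thm:sep_as_mixture}, bound the diagonal terms by Gini coefficients via Jensen (Lemma~\ref{lemma:rocwasserstein2}), and recover $\bias_{\text{ROC}}+\bias_{\text{xROC}}$ from the four cross terms by the triangle inequality --- your single ``sum of diagonals $\le$ sum of sides'' step is exactly the paper's two triangle-inequality chains averaged. You are in fact more careful than the paper on one point: the reflection identity $\ROC_{H,G}=\ROC_{G,H}^{-1}$ (equivalently $\int_0^1|\phi-\psi|=\int_0^1|\phi^{-1}-\psi^{-1}|$), which is needed to place the two PE cross terms in the same coordinate system as $\bias_{\text{ROC}}$ and $\bias_{\text{xROC}}$, is used only implicitly in the paper's proof.
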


Note, that the constant $\min(w_{a0},w_{a1}, w_{b0}, w_{b1})/2$ is fixed for each dataset. Thus, decreasing both separation biases leads to a decrease of the sum of both ROC biases as well as the separability of the groups within each outcome class. Especially, separation biases of zero also diminish both ROC biases.
\begin{corollary}
\label{cor:zeroseparation}
    Zero separation biases imply zero ROC biases
    \begin{equation}
        \begin{split}
            &\bias_{\text{EO}}^{S}(S_a, S_b) = \bias_{\text{PE}}^{S}(S_a, S_b) = 0 \\
            &\Rightarrow \bias_{\text{ROC}}(S_a, S_b) = \bias_{\text{xROC}}(S_a, S_b) = 0. 
        \end{split}
    \end{equation}
    The inverse does not hold. 
\end{corollary}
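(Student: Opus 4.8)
The plan is to turn the vanishing of both separation biases into equality of the class-conditional score laws across groups, after which the two ROC biases collapse automatically. By Theorem~\ref{quantilebias}(i), $\bias_{\text{EO}}^{S}(S_a,S_b)=W_1(F(S_{a0}),F(S_{b0}))$ and $\bias_{\text{PE}}^{S}(S_a,S_b)=W_1(F(S_{a1}),F(S_{b1}))$, so the hypothesis forces $F(S_{a0})\stackrel{d}{=}F(S_{b0})$ and $F(S_{a1})\stackrel{d}{=}F(S_{b1})$ (a Wasserstein-$1$ distance is zero precisely when the two laws agree). Under the standing regularity assumption of Section~\ref{section:roc} --- $F=F_S$ continuous and strictly increasing, hence invertible, which is exactly what guarantees $\ROC_{X,X}(r)=r$ --- applying $F^{-1}$ yields $S_{a0}\stackrel{d}{=}S_{b0}$ and $S_{a1}\stackrel{d}{=}S_{b1}$. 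Since $\ROC_{G,H}$ is, by its definition, a function of the marginal laws of $G$ and $H$ only, we may substitute $S_{a0}$ for $S_{b0}$ and $S_{b1}$ for $S_{a1}$ inside every ROC curve occurring in $\bias_{\text{ROC}}$ and $\bias_{\text{xROC}}$: this gives $\ROC_{S_{b0},S_{b1}}\equiv\ROC_{S_{a0},S_{a1}}$ and $\ROC_{S_{b0},S_{a1}}\equiv\ROC_{S_{a0},S_{b1}}$, so both integrands vanish identically. An equivalent, slicker route feeds $\bias_{\text{EO}}^{S}=\bias_{\text{PE}}^{S}=0$ into Theorem~\ref{thm:final_inequality_wasserstein_roc}: the left-hand side is $0$, the constant $\min(w_{a0},w_{a1},w_{b0},w_{b1})/2$ is strictly positive (assuming, as throughout, that every group--class combination has positive probability), and the two $\gini$ terms are zero --- either because $S_{a0}\stackrel{d}{=}S_{b0}$, $S_{a1}\stackrel{d}{=}S_{b1}$, or directly because the first summand of Theorem~\ref{thm:sep_as_mixture} forces $W_{S_{a0}}(S_{a0},S_{b0})=0$, hence $\gini(S_{b0},S_{a0})=0$ by Lemma~\ref{lemma:rocwasserstein2} --- so the inequality reduces to $\bias_{\text{ROC}}+\bias_{\text{xROC}}\le 0$ with both summands nonnegative.

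\textbf{That the inverse fails.} Here I would exhibit a concrete model in which both groups are ``perfectly predictive'' yet have visibly different class-conditional score distributions. Take $\mathcal S=[0,1]$ with equal cell weights $w_{ay}=w_{by}=\tfrac14$, let the class-$1$ scores be uniform on $[0,\tfrac12]$ in group $a$ and on $[0,\tfrac14]$ in group $b$, and let the class-$0$ scores be uniform on $[\tfrac12,1]$ in group $a$ and on $[\tfrac34,1]$ in group $b$. Since every class-$0$ score then lies above every class-$1$ score \emph{across both groups}, for any class-$0$ law $G\in\{S_{a0},S_{b0}\}$ and any class-$1$ law $H\in\{S_{a1},S_{b1}\}$ one has $\ROC_{G,H}(p)=1$ for all $p\in(0,1)$ --- the same ``perfect'' curve --- so in particular $\ROC_{S_{b0},S_{b1}}\equiv\ROC_{S_{a0},S_{a1}}$ and $\ROC_{S_{b0},S_{a1}}\equiv\ROC_{S_{a0},S_{b1}}$, hence $\bias_{\text{ROC}}(S_a,S_b)=\bias_{\text{xROC}}(S_a,S_b)=0$. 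On the other hand the pooled density $f$ is strictly positive on $(0,1)$, so $F$ is a continuous strictly increasing bijection of $[0,1]$; as the law of $S_{a0}$ differs from that of $S_{b0}$ and the law of $S_{a1}$ differs from that of $S_{b1}$, the laws of $F(S_{a0})$ and $F(S_{b0})$ (resp.\ $F(S_{a1})$ and $F(S_{b1})$) also differ, whence Theorem~\ref{quantilebias}(i) gives $\bias_{\text{EO}}^{S}(S_a,S_b)=W_1(F(S_{a0}),F(S_{b0}))>0$ and likewise $\bias_{\text{PE}}^{S}(S_a,S_b)>0$. This is precisely the failure mode anticipated in Section~\ref{section:roc}: the ROC curves are blind to a group-specific shift of the score range that leaves within-class separability untouched, whereas $\bias^{S}$ detects it.

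\textbf{Main obstacle.} For the forward direction the only genuinely delicate step is the passage from $W_1(F(S_{ay}),F(S_{by}))=0$ to $S_{ay}\stackrel{d}{=}S_{by}$: it uses invertibility of $F$, i.e.\ the continuous--strictly-increasing branch of the section's standing hypothesis; in the alternative ``linearly interpolated from finite data'' regime, where $F$ is a step function, one should instead argue through the $\gini$-vanishing observation above (Theorem~\ref{thm:sep_as_mixture} together with Lemma~\ref{lemma:rocwasserstein2}) and Theorem~\ref{thm:final_inequality_wasserstein_roc}. For the inverse, the work is entirely in the choice of witness: it must simultaneously annihilate \emph{both} ROC biases --- which the globally ``all class-$0$ above all class-$1$'' arrangement does by collapsing all four ROC curves onto the same corner --- and keep the quantile-transformed laws of $S_{ay}$ and $S_{by}$ distinct, which is why the pooled density must stay positive on the whole score range so that $F$ does not erase the difference between $S_{ay}$ and $S_{by}$.
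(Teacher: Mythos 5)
Your proof is correct, and your ``slicker route'' is exactly the paper's (implicit) argument: the corollary is stated as an immediate consequence of Theorem~\ref{thm:final_inequality_wasserstein_roc}, with no separate proof given in the appendix. Two things you do go beyond the paper. First, your direct route --- $W_1(F(S_{ay}),F(S_{by}))=0$ forces $S_{ay}\stackrel{d}{=}S_{by}$ under the invertibility of $F$, after which all four ROC curves in $\bias_{\text{ROC}}$ and $\bias_{\text{xROC}}$ coincide pairwise --- is more elementary and does not route through the mixture decomposition or the weight constant at all; it also makes transparent \emph{why} the implication holds (equal class-conditional laws across groups trivialize every cross- and intra-group ROC comparison). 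Second, the paper asserts ``the inverse does not hold'' without exhibiting a witness, whereas your uniform-on-shifted-intervals construction is a clean, checkable counterexample: all four ROC curves collapse to the constant $1$, while the pooled density stays positive so the quantile transform preserves the difference between $F_{a y}$ and $F_{b y}$, giving strictly positive separation biases. One genuinely necessary piece of care that you correctly supply and that a naive reading of Theorem~\ref{thm:final_inequality_wasserstein_roc} would miss: the $\gini$ terms on its right-hand side carry no absolute value and could in principle be negative, so one cannot conclude $\bias_{\text{ROC}}+\bias_{\text{xROC}}\le 0$ from a vanishing left-hand side alone; your observation that Theorem~\ref{thm:sep_as_mixture} forces $W_{S_{a y}}(S_{ay},S_{by})=0$ and hence $\gini=0$ via Lemma~\ref{lemma:rocwasserstein2} (given strictly positive cell weights, which you rightly flag as a hypothesis) closes that gap.
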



\begin{theorem}
\label{thm:zeroseparation2}
    Moreover, if only one separation bias is zero, ROC and cross-ROC bias become equal
    \begin{equation}
        \begin{split}
            &\bias_{\text{EO}}^{S}(S_a, S_b) =0 \text{ or } \bias_{\text{PE}}^{S}(S_a, S_b) =0 \\
            \Rightarrow& \bias_{\text{ROC}}(S_a, S_b) = \bias_{\text{xROC}}(S_a, S_b).
        \end{split}
    \end{equation}
\end{theorem}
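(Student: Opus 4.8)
I would give a direct argument: a vanishing separation bias forces one pair of group-conditional outcome distributions to coincide, and substituting that single equality into the $\ROC$-bias integrand turns it into the $\xROC$-bias integrand.

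\textbf{Step 1 (from zero bias to equal conditional distributions).} Suppose first $\bias_{\text{EO}}^{S}(S_a, S_b) = 0$. By Theorem~\ref{quantilebias}(i) this quantity equals $W_1(F(S_{a0}), F(S_{b0}))$, and since $W_1$ is a genuine metric on probability measures it vanishes only if $F(S_{a0})$ and $F(S_{b0})$ are equal in distribution. Under the regularity assumed throughout Section~\ref{section:roc}---$F$ continuous and strictly increasing on $\mathcal{S}$, hence invertible, and $\mathcal{S}$ containing the supports of all four conditionals---applying $F^{-1}$ gives $S_{a0} \stackrel{d}{=} S_{b0}$, i.e.\ $F_{a0} \equiv F_{b0}$. (Equivalently, one may read this off the first term of the decomposition in Theorem~\ref{thm:sep_as_mixture}.)

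\textbf{Step 2 (substitution into the ROC curves).} Plugging $F_{a0} = F_{b0}$ into $\ROC_{G,H}(p) = 1 - F_G(F_H^{-1}(1-p))$ gives, for every $p \in [0,1]$,
\begin{align}
\ROC_{S_{a0},S_{a1}}(p) &= 1 - F_{b0}(F_{a1}^{-1}(1-p)) = \ROC_{S_{b0},S_{a1}}(p), \\
\ROC_{S_{a0},S_{b1}}(p) &= 1 - F_{b0}(F_{b1}^{-1}(1-p)) = \ROC_{S_{b0},S_{b1}}(p).
\end{align}
Substituting these two identities into
\[
\bias_{\text{ROC}}(S_a, S_b) = \int_0^1 |\ROC_{S_{b0},S_{b1}}(p) - \ROC_{S_{a0},S_{a1}}(p)|\,dp
\]
replaces the integrand by $|\ROC_{S_{a0},S_{b1}}(p) - \ROC_{S_{b0},S_{a1}}(p)|$, which is exactly the $\xROC$ integrand; hence $\bias_{\text{ROC}}(S_a, S_b) = \bias_{\text{xROC}}(S_a, S_b)$.

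\textbf{Step 3 (the PE case and the main obstacle).} If instead $\bias_{\text{PE}}^{S}(S_a, S_b) = 0$, Theorem~\ref{quantilebias}(i) yields $F_{a1} \equiv F_{b1}$ by the same reasoning, whence $\ROC_{S_{b0},S_{b1}} = \ROC_{S_{b0},S_{a1}}$ and $\ROC_{S_{a0},S_{a1}} = \ROC_{S_{a0},S_{b1}}$, and the identical substitution gives $\bias_{\text{ROC}} = \bias_{\text{xROC}}$. The only delicate point is Step~1---passing from $W_1 = 0$ to equality of the conditional CDFs themselves rather than of their $F$-pushforwards, which is precisely where invertibility of $F$ enters; everything after that is routine substitution in the definition of the ROC curve.
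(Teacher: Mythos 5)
Your proposal is correct and follows essentially the same route as the paper's proof: derive $F_{a0}=F_{b0}$ (resp.\ $F_{a1}=F_{b1}$) from the vanishing separation bias, then substitute into the ROC-bias integrand and observe that swapping $F_{a0}$ and $F_{b0}$ turns it into the xROC-bias integrand. Your Step~1 is in fact slightly more careful than the paper, which simply asserts $F_{a0}=F_{b0}$ almost everywhere without flagging the invertibility of $F$ needed to pass from equality of the $F$-pushforwards back to the conditional CDFs themselves.
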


\section{Experiments} 
\label{section:sims}
We use the COMPAS dataset\footnote{https://raw.githubusercontent.com/propublica/compas-analysis/master/compas-scores-two-years.csv}, the Adult dataset\footnote{https://archive.ics.uci.edu/ml/machine-learning-databases/adult/adult.data}
 and the German Credit dataset\footnote{https://www.kaggle.com/datasets/uciml/german-credit?resource=download} to demonstrate the application of the fairness measures for continuous risk scores. For each bias, we perform permutation tests to determine statistical significance under the null hypothesis of group parity \cite{diciccioEvaluatingFairnessUsing2020, schefzikFastIdentificationDifferential2021}. 
 The core of this paper is our novel bias evaluation metric, therefore the focus of our experiments is not on achieving a low bias, but on demonstrating where and how detecting bias is useful, for example while comparing different models and analyzing debiasing approaches. 
 In addition, we perform an experiment with synthetic datasets where the equal opportunity bias is controllable by one parameter.
 Experimental details and complete results including all presented bias types can be found in appendix. 
 The code used for the experiments in this study is online available \footnote{https://github.com/schufa-innovationlab/fair-scoring}. The repository includes detailed instructions for reproducing the results.

\subsection{COMPAS}
We calculate the different types of biases for the famous COMPAS decile score ($n=7214$), which predicts the risk of violent recidivism within two years following release. We choose race as protected attribute and set African-America as the expected discriminated group versus Caucasian race. To be consistent with the notation in this paper, we calculate the counter-score, so that a high score stands for the favorable outcome. In contrast to the original analysis \cite{larson2016we} we calculate the bias over the entire score area. Results (Table \ref{compas-table}) show a significant separation bias against the African-American and in favor of the Caucasian race. The disadvantaged group experiences a much lower true-positive rate (rate difference in average $\bias_{\text{EO}}^S=0.16$) as well as false positive rate (rate difference in average $\bias_{\text{PE}}^S=0.15$). The calibration bias is lower and not statistically significant but predominantly in favor of the African-American race. While the ROC bias is also low (implying that the separability is equally good in both groups considered independently), the cross-ROC bias is again high. In this case, there is not much difference between $\bias^{\mathcal{U}}$ and $\bias^S$ (complete results can be found in appendix).

\begin{table}
  \caption{Bias of COMPAS score of African-American vs. Caucasian.}
  \label{compas-table}
  \vskip 0.1in
  \centering
  \begin{tabular}{lllll}
    \toprule
    type of bias & total & pos. & neg. & p-value \\
    \midrule
    $\bias_{\text{EO}}^S$ & 0.161 & 0\% &  100\% & <0.01\\ 
    $\bias_{\text{PE}}^S$ & 0.154 & 0\% &  100\% & <0.01 \\ 
    $\bias_{\text{CALI}}^S$ & 0.034  & 79\% & 21\% & 0.30\\ 
    \midrule
    $\bias_{\text{ROC}}$ & 0.016 & 46\%  & 54\% & 0.31 \\ 
    $\bias_{\text{xROC}}$ & 0.273 & 0\% & 100\% & <0.01 \\ 
    \bottomrule
  \end{tabular}
  \vskip -0.1in
\end{table}

\subsection{German Credit Data}
Moreover, we trained two logistic regression scores on the German Credit Risk dataset ($n=1000$) to predict if a borrower belongs to the good risk class. The first model \emph{LogR} uses all available nine predictors including the feature \emph{sex}, which we choose as protected attribute. For the second score \emph{LogR (debiased)}, the protected attribute was removed from the model input. We set \emph{female} as the expected discriminated group. The scores achieve an AUROC of 0.772 and 0.771.

Compared to COMPAS, the separation biases of both models are lower (all below 0.1) whereas the calibration biases are higher (close to 0.3).
Removing the attribute decreases the separation bias (Table \ref{gcd-table}), while it slightly increases the calibration bias. Note that while \emph{LogR} contains bias to the detriment of female, the debiased model predominantly favors female over male. This demonstrates the use and importance of the split into positive and negative components introduced in \ref{sec:pos-neg-components}. 

\begin{table*}
  \caption{Gender bias of logistic regression (trained with and without sex) scores on German Credit Risk dataset; positive and negative component from the point of view of female persons.}
  \label{gcd-table}
  \vskip 0.1in
  \centering
  \begin{tabular}{lllllllll}
    \toprule
    &\multicolumn{4}{c}{LogR} & \multicolumn{4}{c}{LogR (debiased)}  \\
    \cmidrule(r){2-5}\cmidrule(r){6-9}
    type of bias & total bias & pos. & neg. & p-value & total bias & pos. & neg. & p-value\\
    \midrule
    $\bias_{\text{EO}}^S$ & 0.083 & 1\% &  99\% & 0.04 & 0.048 & 93\% &  7\% & 0.32\\ 
    $\bias_{\text{PE}}^S$ & 0.092 & 0\% &  100\% & 0.09 &  0.025 & 62\% &  38\% & 0.99\\ 
     $\bias_{\text{CALI}}^S$ & 0.291 & 46\% &  54\% & 0.35 & 0.299 & 58\% &  42\% & 0.26\\ 
    \bottomrule
  \end{tabular}
  
  \vskip -0.1in
\end{table*}

\subsection{UCI Adult}

\begin{figure*}[ht]
    \vskip 0.2in
    \centering
    \begin{minipage}{0.48\textwidth}
        \centering
        \includegraphics[width=\textwidth]{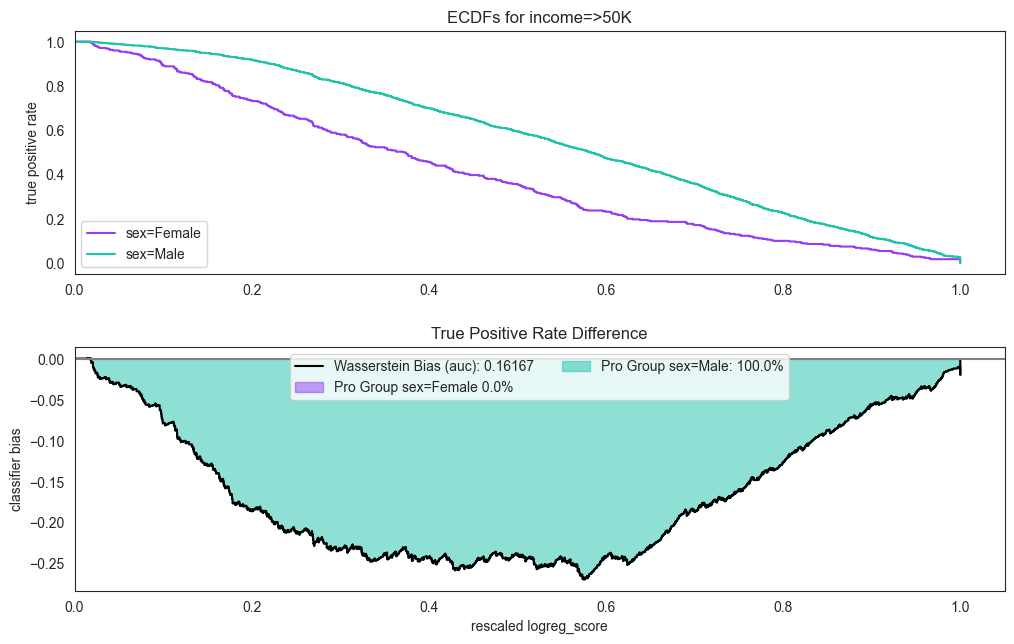}
        \caption*{(a) $\bias_{\text{EO}}^\mathcal{U}$}
        \label{adult-WEO-U}
    \end{minipage}\hfill
    \begin{minipage}{0.48\textwidth}
        \centering
        \includegraphics[width=\textwidth]{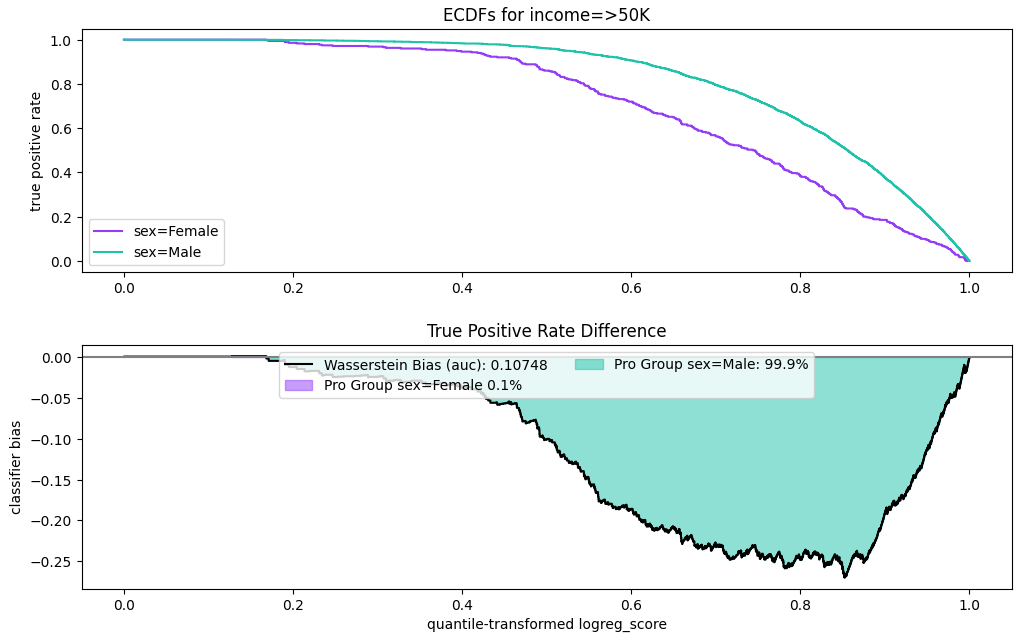}
        \caption*{(b) $\bias_{\text{EO}}^S$}
        \label{adult-WEO-S}
    \end{minipage}
    \caption{Equal opportunity biases $\bias_{\text{EO}}^\mathcal{U}$ and $\bias_{\text{EO}}^S$ of the logistic regression model trained on the Adult dataset. Each of the biases is equal to the area under the curve of the true positive rate difference. The area is colored according to the group for which the bias part is favorable.}
    \label{adult-WEO-U-and-S}
    \vskip -0.1in
\end{figure*}

Moreover, we used the UCI Adult dataset ($n=32561$) to train three different scores that predict the probability of the income being above 50k\$. Again, we choose \emph{sex} as the protected attribute and \emph{female} as the expected discriminated group. As before, a logistic regression was trained including (\emph{logR}) and excluding (\emph{logR (debiased)}) the protected attribute \emph{sex}. Moreover, an XGBoost model (\emph{XGB}), was trained with the complete feature set. XGB is known as one of the best performing methods on tabular data~\cite{DBLP:journals/corr/abs-2106-03253}. The logistic regression achieved an AUROC of 0.898 with and of 0.897 without the protected attribute, the XGB model achieved an AUROC of 0.922 on the testset. Resulting biases are shown in Table \ref{adult-table}, with the lowest bias in bold. 

Removing the protected attribute from the model input improves all biases of LogR except $\bias_{\text{ROC}}$ but separation biases are still against female while the calibration bias of the debiased model is predominantly in favor of female. XGB outperforms the logistic regression model that was trained on the same data in terms of fairness. In half of the cases, the bias of the XGB model is even smaller than the bias of logR (debiased). Here, due to the high sample size, all biases are statistically significant. We see a difference between $\bias^{\mathcal{U}}$ and $\bias^{S}$ that is due to the skewed score distributions on the imbalanced dataset (appendix Fig. 
C1-C3): in general rate differences in the range of low scores are weighted higher for $\bias^{S}$ as they effect more people (Fig. \ref{adult-WEO-U-and-S}). Note that $\bias_{\text{ROC}}$ is in favor of female persons: Looking only at groupwise ROC curves ($\bias_\text{ROC}$) suggests an advantage for females. However, female persons experience lower true- and false positive rates at every possible threshold that is chosen independently of the group, as $\bias^S_{\text{EO}}$ and $\bias^S_{\text{PE}}$ clearly show.


\begin{table*}
  \caption{Gender bias of logistic regression (trained with and without sex) and XGBoost on Adult dataset; positive and negative component from the point of view of female persons. Each permutation tests gives $p<0.01$.}
  \label{adult-table}
  \vskip 0.1in
  \centering
  \begin{tabular}{llllllllll}
    \toprule
    &\multicolumn{3}{c}{LogR} &  \multicolumn{3}{c}{LogR (debiased)} &  \multicolumn{3}{c}{XGB}                \\
    \cmidrule(r){2-4}\cmidrule(r){5-7}\cmidrule(r){8-10}
    type of bias &  total bias & pos. & neg. & total bias & pos. & neg. & total bias & pos. & neg.\\
    \midrule
    $\bias_{\text{EO}}^S$ & 0.107 & 0\% &  100\% & 
        0.069 & 0\% &  100\% &
        \textbf{0.057} & 1\% &  99\% \\ 
    $\bias_{\text{PE}}^S$ & 0.164 & 0\% &  100\% & 
        \textbf{0.121} & 0\% &  100\% &
        0.143 & 0\% &  100\%\\ 
     $\bias_{\text{CALI}}^S$ & 0.052 & 22\% &  78\% &
        \textbf{0.045} & 55\% &  45\% &
        0.050 & 52\% &  48\%\\ 
    \midrule
    $\bias_{\text{ROC}}$ & 0.050  & 98\%  & 2\%  &
        0.051 & 98\% &  2\% &
        \textbf{0.033} & 98\% &  2\%\\
    $\bias_{\text{xROC}}$ & 0.205  & 0\% & 100\% &
        0.151 & 0\% & 100\% &
        \textbf{0.129} & 0\% & 100\% \\ 
    \midrule
    $\bias_{\text{EO}}^\mathcal{U}$ & 0.161  & 0\% & 100\% &
        0.104 & 0\% & 100\%  &
        \textbf{0.087} & 0\% & 100\% \\
    $\bias_{\text{PE}}^\mathcal{U}$& 0.118 &  0\% & 100\% &
        \textbf{0.098} & 0\% & 100\% & 
        0.101 & 0\% & 100\% \\ 
    $\bias_{\text{CALI}}^\mathcal{U}$&  0.105  &  20\%&  80\% &
        \textbf{0.102} & 50\% & 50\% &
         0.138 & 62\% & 38\%\\ 
    \bottomrule
  \end{tabular}
  \vskip -0.1in
\end{table*}

\subsection{Synthetic Data}

\begin{figure}[bt]
    \centering
    \includegraphics[width=\columnwidth]{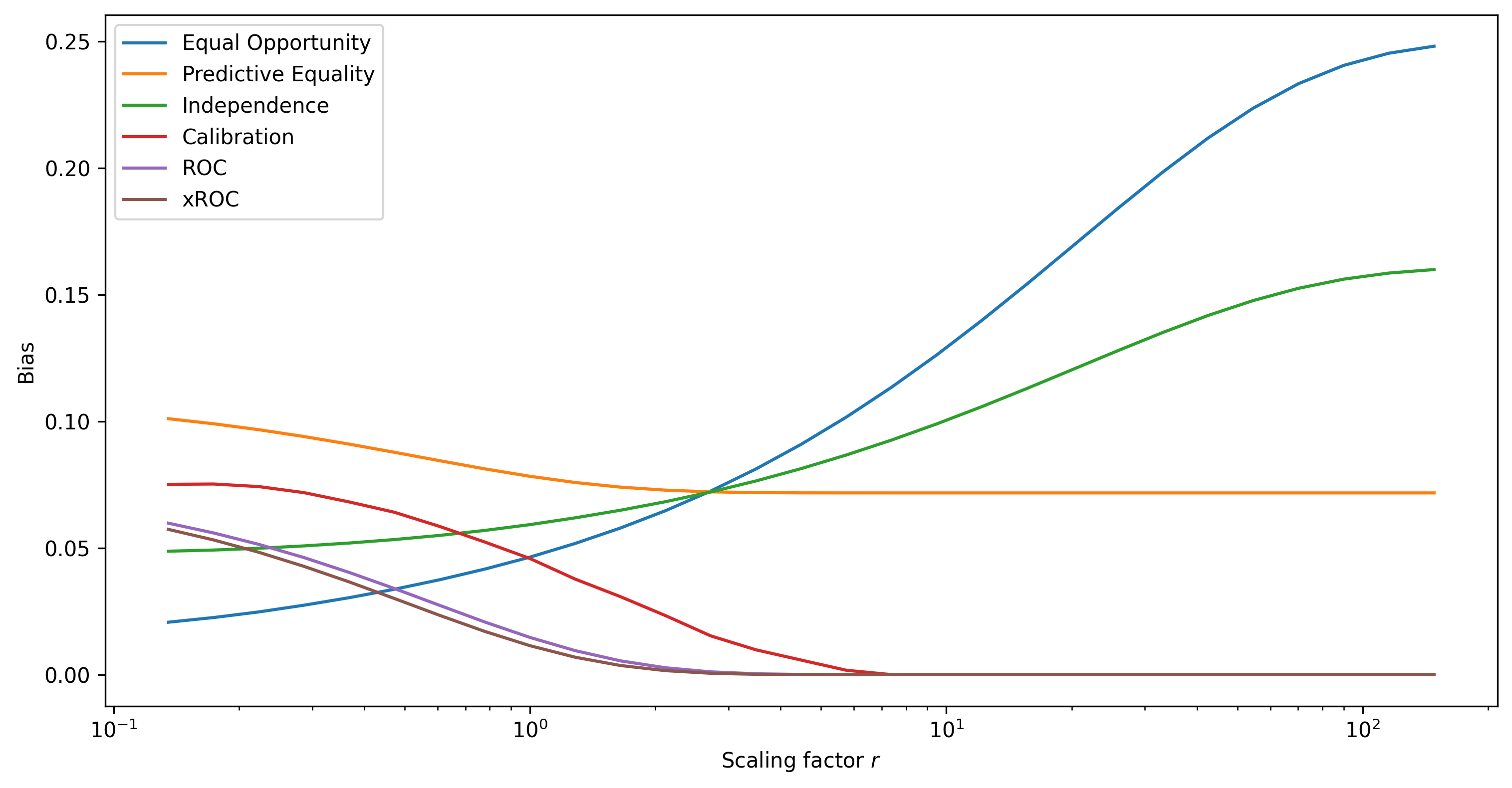}
    \caption{Changing bias measures with increasing distance between the groups and classes.}
    \label{fig:synthetic}
\end{figure}

In order to evaluate how different metrics change when the bias changes, we make use of synthetic datasets. This allows us to change the bias and observe the effect on the different metrics. For this reason, we sample $S_{a0}, S_{a1}, S_{b0}$ and $S_{b1}$ independently from four Gaussian distributions. 

Utilizing a scaling factor $r>0$, we set the following distributions: $S_{a0}\sim \mathcal{N}(1\cdot r, 0.6^2\cdot r)$, $S_{a1}\sim \mathcal{N}(-1, 0.5)$, $S_{b0}\sim \mathcal{N}(1.2\cdot r, 0.75^2\cdot r)$ and $S_{b1}\sim \mathcal{N}(-1.3, 0.6)$. Note that scores of the positive class of both groups move further away from each others with increasing $r$ (i.e. an increasing equal opportunity bias), while the negative class stays unchanged. The effect of this increasing difference can be seen in Fig.~\ref{fig:synthetic}.

We chose this setting to demonstrate the implications of Theorem \ref{thm:final_inequality_wasserstein_roc} and Corollary \ref{cor:zeroseparation}. Even though the difference between $S_{a0}$ and $S_{b0}$ grows, both ROC and xROC are unable to detect this disparity.

\section{Discussion and Outlook}
\label{section:outlook}
In this paper, we introduced a family of standardized group disparity measures for continuous risk scores that have an intuitive interpretation and theoretical grounding based on the Wasserstein distance. We derived their relation to well-established parity concepts and to ROC-based measures and we proved, that reducing the proposed separation biases is a stronger objective than reducing ROC-based measures and, hence, is better suited to cover different sorts of bias. Moreover, we demonstrated the practical application on fairness benchmark datasets. Our results show that removing information about the attribute influences the fairness of a model and also which group is affected by it. They also show that debiasing often leads to a shift between different bias types and should be monitored carefully. XGBoost results may indicate that flexible models can produce fairer results than simpler models.
The results of our experiments can serve as a starting point for a comprehensive comparison of score models (in terms of bias) and debiasing methods for such models. This work would then provide evaluation metrics for such a comparison.

The proposed measures generalize rate differences from classification tasks to entire score models. As a future extension, a generalization of rate ratios is another option that is to be explored. Moreover, the discussed decision model errors (TPR/FPR/Calibration) could be summed or related to each other (i.e., $\nicefrac{\text{TPR}}{\text{FPR}}$) to create further disparity measures. Note also, that the given definitions of the classifier biases are based on the $l_1$-norm. Especially when used for bias mitigation, that we did not cover here, it may also be useful to replace the $l_1$-norm by $l_p$ with $p>1$, especially $l_2$ or $l_\infty$, to penalize large disparities more than small ones. However, the score bias is then no longer a Wasserstein-distance. Another option is to use the Wasserstein-$p$-distance with $p>1$.  Typically, the outcome of fairness analyses is to assess whether certain groups are discriminated against by a score model. All the proposed disparity measures can be used to assess the group disparity of the errors made by the model. While parity, i.e. a small bias, can be taken as a sign that there is no algorithmic unfairness in a sample with respect to a particular type of error, not all disparities are discriminatory. For practical applications we propose not to use hard thresholds to decide whether a model is fair or unfair. If needed, such thresholds can be chosen similarly to the thresholds for classification biases and should be task-specific. Once a high bias is detected, the causes of the disparities should be analyzed in detail to decide for follow-up actions. The relation to the field of causal fairness criteria (i.e. \cite{nilforoshan2022causal, zhang2018fairness, makhlouf2020survey}) is out of scope of this manuscript. Further studies should investigate the relation and how they can be used to perform follow-up analyses in case of significant group disparities. 

\section*{Impact Statement}
This paper extends the existing ways of measuring bias in the context of continuous scores. 
The aim is to report existing bias, particularly in situations where the score itself must be considered, such as credit scores, rather than just a binary decision based on it. This work has the potential to contribute to the discussion of bias in scoring systems and lead to the development and use of fairer, bias-reduced scores.

\bibliographystyle{icml2024}
\bibliography{References}

\newpage
\appendix
\onecolumn
\section{Background definitions and results}
\subsection{Wasserstein-p-Distance}
\begin{definition}[Wasserstein-p-Distance]
The $p^{\text{th}}$ Wasserstein distance between two probability measures $\mu$  and $\nu$  in $\mathcal{P}_{p}(\mathbb{R}^d)$ is defined as

\begin{align}
W_{p}(\mu ,\nu ):=\left(\inf _{\gamma \in \Gamma (\mu ,\nu )}\int _{\mathbb{R}^d\times \mathbb{R}^d}d(x,y)^{p}\,\mathrm {d} \gamma (x,y)\right)^{1/p},
\end{align}
where $\Gamma (\mu ,\nu )$ denotes the collection of all measures on $\mathbb{R}^d\times \mathbb{R}^d$ with marginals $\mu$  and $\nu$ on the first and second factors respectively.
\end{definition}

\begin{corollary}
The Wasserstein metric may be equivalently defined by
\begin{align}
W_{p}(\mu ,\nu )=\left(\inf \operatorname {\mathbb{E}} {\big [}d(X,Y)^{p}{\big ]}\right)^{1/p},
\end{align}
where $\mathbb{E}[Z]$ denotes the expected value of a random variable $Z$ and the infimum is taken over all joint distributions of the random variables $X$ and $Y$ with marginals $\mu$ and $\nu$ respectively.
\end{corollary}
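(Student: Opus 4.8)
The plan is to recognise that the infimum in the definition of $W_{p}(\mu,\nu)$ — taken over couplings $\gamma\in\Gamma(\mu,\nu)$ — and the infimum in the corollary — taken over joint laws of random variables $X\sim\mu$, $Y\sim\nu$ — range over one and the same set of objects, merely described in two different languages: a coupling \emph{is} the joint law of a pair of random variables with the prescribed marginals, and conversely.

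First I would fix this correspondence in both directions. Given $\gamma\in\Gamma(\mu,\nu)$, I would equip $\mathbb{R}^{d}\times\mathbb{R}^{d}$ with $\gamma$ and take $X,Y$ to be the two coordinate projections; the marginal condition then gives $X\sim\mu$ and $Y\sim\nu$, while the joint law of $(X,Y)$ is $\gamma$ by construction. Conversely, given any probability space $(\Omega,\mathcal{F},\mathbb{P})$ and measurable maps $X,Y\colon\Omega\to\mathbb{R}^{d}$ with laws $\mu$ and $\nu$, the pushforward $(X,Y)_{\#}\mathbb{P}$ is a probability measure on $\mathbb{R}^{d}\times\mathbb{R}^{d}$ with marginals $\mu$ and $\nu$, hence an element of $\Gamma(\mu,\nu)$.

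Next I would match the integrands. Writing $\gamma=(X,Y)_{\#}\mathbb{P}$ and applying the transfer theorem (change of variables for a pushforward measure) to the nonnegative measurable function $(x,y)\mapsto d(x,y)^{p}$ gives $\int_{\mathbb{R}^{d}\times\mathbb{R}^{d}} d(x,y)^{p}\,\mathrm{d}\gamma(x,y)=\mathbb{E}\big[d(X,Y)^{p}\big]$, with both sides well-defined in $[0,\infty]$. Consequently the value of the right-hand side depends on $X,Y$ only through their joint law, so the infimum over joint laws coincides with the infimum over couplings; raising to the power $1/p$ then yields the claimed identity.

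The argument is essentially bookkeeping, so I do not expect a genuine analytic obstacle; the one step that needs a little care is verifying that the infimum ``over all joint distributions of $X$ and $Y$'' is truly independent of the ambient probability space on which the random variables are defined — which is precisely what the transfer theorem provides — together with the observation, supplied by the coordinate-projection construction, that every coupling is actually realisable as such a joint law.
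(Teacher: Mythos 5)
Your proposal is correct: the coupling-versus-joint-law identification together with the transfer theorem is exactly the standard argument for this equivalence, and the paper states this corollary as a background fact without supplying any proof, so there is no alternative route to compare against. The one point you rightly flag --- that the infimum over random variables does not depend on the ambient probability space and that every coupling is realisable via coordinate projections --- is handled adequately by your two-directional correspondence.
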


If $d=1$, the Wasserstein distance has a closed form. For this special case, we define $W$ as a measure between two random variables.
\begin{corollary}
Let $X$ and $Y$ be two random variables on $\mathbb{R}$ and let $F_X$ and $F_Y$ denote their cumulative distribution functions. Then
\begin{align}W_{p}(X ,Y) = \left(\int_0^1|F_X^{-1}(s)-F_Y^{-1}(s)|^p \,ds\right)^{\frac{1}{p}}
\end{align}
\end{corollary}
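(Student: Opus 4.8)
The plan is to prove the two inequalities $W_p(X,Y)\le\big(\int_0^1|F_X^{-1}(s)-F_Y^{-1}(s)|^p\,ds\big)^{1/p}$ and $W_p(X,Y)\ge\big(\int_0^1|F_X^{-1}(s)-F_Y^{-1}(s)|^p\,ds\big)^{1/p}$ separately. For the upper bound I would exhibit one explicit coupling that attains the right-hand side. Take $U\sim\mathcal{U}[0,1]$ and set $\widetilde X:=F_X^{-1}(U)$, $\widetilde Y:=F_Y^{-1}(U)$. Using the Galois-type identity for the generalized inverse, $F_G^{-1}(u)\le t\iff u\le F_G(t)$, one checks in one line that $\mathbb{P}(F_G^{-1}(U)\le t)=\mathbb{P}(U\le F_G(t))=F_G(t)$, so $\widetilde X$ and $\widetilde Y$ have laws $\mu$ and $\nu$; hence $(\widetilde X,\widetilde Y)$ is an admissible coupling. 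By the expectation form of $W_p$ (the corollary above), $W_p(X,Y)^p\le\mathbb{E}[|\widetilde X-\widetilde Y|^p]=\int_0^1|F_X^{-1}(s)-F_Y^{-1}(s)|^p\,ds$; boundedness of $\mathcal S$ makes everything finite.

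For the lower bound I would show this \emph{comonotone} coupling is optimal. The structural fact driving this is that for $p\ge 1$ the cost $c(x,y)=|x-y|^p=h(x-y)$ with $h$ convex is submodular: whenever $x_1\le x_2$ and $y_1\le y_2$,
$$c(x_1,y_1)+c(x_2,y_2)\le c(x_1,y_2)+c(x_2,y_1),$$
because $x_1-y_1$ and $x_2-y_2$ both lie in the interval spanned by $x_1-y_2$ and $x_2-y_1$, have the same sum as those endpoints, and $h$ is convex. From submodularity one deduces, via a swapping argument (equivalently, $c$-cyclical monotonicity of the support of an optimal plan), that any optimal plan is supported on a monotone non-decreasing set, and the only such plan with the prescribed marginals is exactly $(F_X^{-1}(U),F_Y^{-1}(U))$. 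An alternative route that sidesteps the optimal-support machinery: for $p=1$ prove directly that $W_1(X,Y)=\int_{\mathbb R}|F_X(t)-F_Y(t)|\,dt$ by the layer-cake identity $|X-Y|=\int_{\mathbb R}|\mathbf 1\{X\le t\}-\mathbf 1\{Y\le t\}|\,dt$ together with the elementary bound $\mathbb{P}_\gamma(\text{exactly one of }X,Y\le t)\ge|F_X(t)-F_Y(t)|$ valid for every coupling $\gamma$, then identify $\int_{\mathbb R}|F_X-F_Y|$ with $\int_0^1|F_X^{-1}-F_Y^{-1}|$ by comparing the planar region between the two c.d.f. graphs read "horizontally" versus "vertically"; the general-$p$ statement then follows since the monotone coupling is simultaneously optimal for every convex cost.

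The main obstacle is the lower bound: turning "the monotone coupling is optimal" into a rigorous statement for arbitrary (possibly atomic) couplings requires either invoking $c$-cyclical monotonicity with a measurability/approximation argument or carrying the discrete swapping argument to the limit. A secondary technical nuisance is that $F_X,F_Y$ need not be continuous or strictly increasing, so every manipulation must be phrased through the generalized inverse $G^{-1}(p)=\inf\{x:p\le G(x)\}$ of the paper; in particular the horizontal-versus-vertical change of variables must be justified at the jump points of the c.d.f.s. Since the result is classical, in the paper I would cite a standard reference (e.g. Villani, \emph{Topics in Optimal Transportation}) for the optimality half and spell out only the explicit comonotone coupling that gives the upper bound.
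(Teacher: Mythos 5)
The paper states this corollary as classical background and never proves it (the appendix's ``Complete proofs'' section starts only with the paper's own results), so there is no in-paper argument to compare against; your proposal is the standard and correct route. The upper-bound half is complete as written: the Galois identity $F_G^{-1}(u)\le t\iff u\le F_G(t)$ for the paper's generalized inverse does show that $(F_X^{-1}(U),F_Y^{-1}(U))$ is an admissible coupling, giving $W_p(X,Y)^p\le\int_0^1|F_X^{-1}(s)-F_Y^{-1}(s)|^p\,ds$. For the lower bound, the submodularity of $c(x,y)=h(x-y)$ with $h$ convex and the resulting optimality of the comonotone coupling is the textbook argument, and citing Villani for that half is appropriate given that the paper itself treats the whole statement as a citation-level fact. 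One caveat: your ``alternative route'' is self-contained only for $p=1$ (where the layer-cake identity plus $\mathbb{P}_\gamma(\text{exactly one of }X,Y\le t)\ge|F_X(t)-F_Y(t)|$ and the horizontal-versus-vertical Fubini argument do close the loop); for $p>1$ its final sentence simply re-asserts that the monotone coupling is optimal for every convex cost, which is exactly the statement the first route was invoked to establish, so it is not an independent second proof. You correctly flag the generalized-inverse subtleties at atoms and flat stretches of the c.d.f.s; those are the only places where care beyond the sketch is needed.
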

 
\begin{proposition} Properties of the Wasserstein-Distance for $d=1$:
\begin{enumerate}
    \item For any real number $a$, $W_p(aX, aY) = |a| W_p(X, Y ).$
    \item For any fixed vector $x$, $W_p(X + x, Y + x) = W_p(X, Y ).$
    \item For independent $X_1,\ldots, X_n$ and independent $Y_1,\ldots, Y_n$, $$W_p\big(\sum_{i=1}^n X_i, \sum_{i=1}^n Y_i\big) \leq \sum_{i=1}^n W_p(X_i, Y_i ).$$ 
\end{enumerate}
\end{proposition}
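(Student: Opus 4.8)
The plan is to work from the two closed-form characterizations stated just above: the coupling formula $W_p(\mu,\nu)^p = \inf \mathbb{E}[|X-Y|^p]$ over all couplings, together with the fact that in dimension one the infimum is attained by the \emph{comonotone} coupling $X = F_X^{-1}(U)$, $Y = F_Y^{-1}(U)$ with $U$ uniform on $[0,1]$ (equivalently, the quantile formula $W_p(X,Y) = (\int_0^1 |F_X^{-1}(s)-F_Y^{-1}(s)|^p\,ds)^{1/p}$). All three properties then reduce to manipulating one well-chosen coupling and applying Minkowski's inequality.

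For (2), I would take an optimal coupling $(X,Y)$ of the laws of $X$ and $Y$; then $(X+x, Y+x)$ is a coupling of the laws of $X+x$ and $Y+x$ with cost $\mathbb{E}[|(X+x)-(Y+x)|^p] = \mathbb{E}[|X-Y|^p]$, giving $W_p(X+x,Y+x) \le W_p(X,Y)$, and the reverse inequality follows by translating back by $-x$. For (1), $(aX, aY)$ is a coupling of the laws of $aX$ and $aY$ with cost $|a|^p\,\mathbb{E}[|X-Y|^p]$, so $W_p(aX,aY) \le |a|\,W_p(X,Y)$; for $a\neq 0$ the reverse follows by scaling by $1/a$, and for $a=0$ both sides vanish. (One can also read (1)--(2) off the quantile formula directly, using $F_{X+x}^{-1}(s) = F_X^{-1}(s)+x$ and $F_{aX}^{-1}(s) = aF_X^{-1}(s)$ for $a>0$, $F_{aX}^{-1}(s) = aF_X^{-1}(1-s)$ for $a<0$, the latter combined with the substitution $s \mapsto 1-s$.)

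For (3), I would, for each $i$, take the comonotone coupling $(X_i, Y_i)$ driven by a uniform variable $U_i$, chosen so that the $U_i$ are \emph{independent across} $i$. Then $(X_1,\dots,X_n)$ has independent coordinates with the prescribed marginals, so $\sum_{i=1}^n X_i$ has the law of the independent sum, and likewise for $\sum_{i=1}^n Y_i$; hence $\big(\sum_i X_i, \sum_i Y_i\big)$ is a valid coupling of the two independent-sum laws, and
\[
W_p\Big(\sum_{i=1}^n X_i,\ \sum_{i=1}^n Y_i\Big) \;\le\; \Big\|\sum_{i=1}^n (X_i - Y_i)\Big\|_{L^p} \;\le\; \sum_{i=1}^n \|X_i - Y_i\|_{L^p} \;=\; \sum_{i=1}^n W_p(X_i, Y_i),
\]
where the middle step is the triangle inequality in $L^p$ (Minkowski). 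The main obstacle is the bookkeeping in this last step: one must ensure the constructed random vector genuinely has the law of the independent sum, which is exactly why the driving uniforms must be independent across $i$ while comonotonicity is retained \emph{within} each pair — it is the assumed within-family independence of the $X_i$ (resp.\ $Y_i$) that makes the marginals of the sums come out right. If one prefers not to invoke attainment of the optimum, the same argument goes through with $\varepsilon$-optimal couplings and $\varepsilon \to 0$.
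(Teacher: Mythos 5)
The paper states this proposition in the appendix as a background fact and gives no proof of its own, so there is nothing to compare your route against; judged on its own terms, your argument is correct and is the standard one. Parts (1) and (2) follow exactly as you say from pushing an optimal (or $\varepsilon$-optimal) coupling through the affine map and inverting the map to get the reverse inequality. For part (3) you correctly identify the one point that needs care: the driving uniforms $U_i$ must be independent across $i$ so that $\sum_i X_i$ and $\sum_i Y_i$ really have the laws of the independent sums (this is where the hypothesis of within-family independence enters), while comonotonicity within each pair, together with the paper's quantile formula $W_p(X,Y)=(\int_0^1|F_X^{-1}(s)-F_Y^{-1}(s)|^p\,ds)^{1/p}$, gives $\|X_i-Y_i\|_{L^p}=W_p(X_i,Y_i)$; Minkowski then closes the chain. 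The only cosmetic caveat is your side remark that $F_{aX}^{-1}(s)=aF_X^{-1}(1-s)$ for $a<0$: at atoms of $F_X$ this identity can fail pointwise because the generalized inverse is left-continuous, but it holds for almost every $s$, which is all the integral formula needs — and your coupling argument sidesteps the issue entirely.
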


\subsection{Special case: One-dimensional Wasserstein-1-Distance}
\begin{corollary}
\label{onedimW1}
If $p=1$ and $X,Y$ are random variables on $\mathbb{R}$ with cumulative distribution functions $F_X$ and $F_Y$, then
\begin{align}
    W_1(X,Y)&=\int_0^1 |F^{-1}_X(p)-F^{-1}_Y(p)|dp \\
    &= \int_\mathbb{R} |F_X(t)-F_Y(t)|dt. \label{W1_inverse}
\end{align}
\end{corollary}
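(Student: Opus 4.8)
The first equality in the statement is not really new: it is simply the $p=1$ instance of the one-dimensional closed form $W_p(X,Y)=\big(\int_0^1|F_X^{-1}(s)-F_Y^{-1}(s)|^p\,ds\big)^{1/p}$ recorded just above, so nothing needs to be done there. The real content is the second equality, $\int_0^1|F_X^{-1}(p)-F_Y^{-1}(p)|\,dp=\int_{\mathbb{R}}|F_X(t)-F_Y(t)|\,dt$, and the plan is to prove it by a ``layer-cake'' rewriting of the absolute value combined with Tonelli's theorem.

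The starting observation is the elementary identity that for any two reals $a,b$,
\begin{equation}
|a-b|=\int_{\mathbb{R}}\big|\mathbf{1}[t<a]-\mathbf{1}[t<b]\big|\,dt,
\end{equation}
which one checks by taking $a\le b$ without loss of generality: the integrand is then $1$ precisely on $[a,b)$, of length $b-a$. Applying this pointwise with $a=F_X^{-1}(p)$ and $b=F_Y^{-1}(p)$ and integrating over $p\in(0,1)$ rewrites the left-hand side as the double integral $\int_0^1\!\int_{\mathbb{R}}\big|\mathbf{1}[t<F_X^{-1}(p)]-\mathbf{1}[t<F_Y^{-1}(p)]\big|\,dt\,dp$.

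The key step is to invoke the Galois-type duality for the generalized inverse of a right-continuous non-decreasing function, namely $F^{-1}(p)\le t\iff p\le F(t)$, equivalently $t<F^{-1}(p)\iff F(t)<p$. Substituting this into the two indicators converts the inner integrand into $\big|\mathbf{1}[F_X(t)<p]-\mathbf{1}[F_Y(t)<p]\big|$; since everything is nonnegative, Tonelli's theorem justifies exchanging the order of integration, and for each fixed $t$ the integral over $p\in(0,1)$ is, again by the elementary identity applied now to the numbers $F_X(t),F_Y(t)\in[0,1]$, exactly $|F_X(t)-F_Y(t)|$. What remains is then $\int_{\mathbb{R}}|F_X(t)-F_Y(t)|\,dt$, which is the claimed right-hand side.

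I expect the only subtle points to be stating the quantile--CDF equivalence with the inequalities in the correct direction (it is here that right-continuity of $F_X,F_Y$ is used) and noting that the boundary values $p=F_X(t)$ or $p=F_Y(t)$ form a $p$-null set and so do not affect any of the integrals; the measurability needed for Tonelli is automatic, as $(t,p)\mapsto\mathbf{1}[t<F_X^{-1}(p)]$ is the indicator of a measurable subregion of $\mathbb{R}\times(0,1)$.
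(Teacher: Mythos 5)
Your proof is correct. Note that the paper itself offers no proof of this corollary: it is recorded in the background appendix as a standard fact about the one-dimensional Wasserstein distance and then only cited (e.g.\ in the proof of Theorem 3.2), so there is no ``paper approach'' to compare against. Your argument supplies the missing derivation in a clean, standard way: the layer-cake identity $|a-b|=\int_{\mathbb{R}}\bigl|\mathbf{1}[t<a]-\mathbf{1}[t<b]\bigr|\,dt$, the Galois duality $F^{-1}(p)\le t\iff p\le F(t)$ for the generalized inverse (which is exactly the paper's definition $G^{-1}(p)=\inf\{x:p\le G(x)\}$, so right-continuity is available), and Tonelli to swap the order of integration; the inner $p$-integral then collapses to $|F_X(t)-F_Y(t)|$ by the mirror-image of the same elementary identity. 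All the steps check out, including the handling of the boundary null sets and the fact that the equality holds in $[0,\infty]$ even without finite first moments. The one thing you correctly defer rather than prove is the genuinely nontrivial half of the corollary, namely that the quantile coupling attains the infimum defining $W_1$ (i.e.\ the first equality); you attribute it to the $W_p$ closed form stated just above in the paper, which is a fair reading since the paper likewise states that formula without proof.
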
 

\begin{remark}
The Wasserstein-1-distance is not invariant under monotone transformations (for instance, under scale tranformations).
\end{remark}

 \begin{remark}
    The Wasserstein distance is insensitive to small wiggles. For example if $P$ is
     uniform on $[0, 1]$ and $Q$ has density $1+\sin(2\pi kx)$ on $[0, 1]$ then their Wasserstein distance is $\mathcal{O}(1/k)$.
 \end{remark}

\begin{theorem}[lower bound of $W_1$]
    \label{thm:mean_diff}
    The Wasserstein-distance is always greater or equal to the distance of the means:
    \begin{align}
    W_1(X,Y) \geq | \mathbb{E}[X]-\mathbb{E}[Y] | 
    \end{align}
\end{theorem}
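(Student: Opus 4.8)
The plan is to reduce the statement to Jensen's inequality applied to the convex map $t \mapsto |t|$. Working within the one-dimensional tools already in place, I would start from the quantile representation in Corollary \ref{onedimW1}, namely $W_1(X,Y) = \int_0^1 |F_X^{-1}(p) - F_Y^{-1}(p)|\,dp$. The triangle inequality for integrals lets me pull the absolute value outside, giving $W_1(X,Y) \ge \bigl| \int_0^1 F_X^{-1}(p)\,dp - \int_0^1 F_Y^{-1}(p)\,dp \bigr|$. It then suffices to identify $\int_0^1 F_X^{-1}(p)\,dp$ with $\mathbb{E}[X]$: since $F_X^{-1}(U)$ has the same law as $X$ when $U$ is uniform on $[0,1]$ (the inverse/quantile transform, which is the same fact underlying $G^{-1}(G(X)) = X$ almost surely recorded in Section \ref{section:classifiers}), we get $\mathbb{E}[X] = \mathbb{E}[F_X^{-1}(U)] = \int_0^1 F_X^{-1}(p)\,dp$, and likewise for $Y$. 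Substituting back yields the claim.

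An alternative route, which I would mention in a line, is to argue directly from the coupling characterization: for any joint distribution of $(X',Y')$ with the correct marginals, linearity of expectation gives $\mathbb{E}[X'] = \mathbb{E}[X]$ and $\mathbb{E}[Y'] = \mathbb{E}[Y]$, and Jensen's inequality gives $\mathbb{E}[|X'-Y'|] \ge |\mathbb{E}[X'-Y']| = |\mathbb{E}[X]-\mathbb{E}[Y]|$; since this holds uniformly over all couplings, taking the infimum on the left preserves the bound. Either argument is a two-line computation.

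There is no genuine obstacle here; the only points deserving a word of care are the implicit assumptions that $\mathbb{E}[X]$ and $\mathbb{E}[Y]$ exist and are finite and that $\mu,\nu \in \mathcal{P}_1(\mathbb{R})$ so the chosen representation of $W_1$ is available — both immediate in our setting, where all scores take values in the bounded set $\mathcal{S}$ and are therefore bounded random variables. I would close by remarking that the inequality is generally strict (e.g. $X = -Y$ with $Y$ symmetric about $0$ gives $W_1(X,Y) > 0 = |\mathbb{E}[X]-\mathbb{E}[Y]|$), so this lower bound is typically far from tight.
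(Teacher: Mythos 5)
Your proof is correct and matches the paper's, which consists of the single line ``By Jensen inequality, as norm is convex''; both of your routes (triangle inequality under the integral in the quantile representation, or Jensen applied to an arbitrary coupling) are just explicit instantiations of that one idea. One tangential slip: your example of strictness fails, because $W_1$ depends only on the marginal laws and for $Y$ symmetric about $0$ one has $-Y \stackrel{d}{=} Y$, hence $W_1(-Y,Y)=0$; a correct example is $X \sim \mathcal{U}[0,1]$ versus $Y \equiv \tfrac12$, where both means are $\tfrac12$ but $W_1(X,Y)=\tfrac14>0$.
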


\begin{proof}
By Jensen inequality, as norm is convex.
\end{proof}

\begin{theorem}[upper bound of $W_1$]
    For integers $p\leq q$,  
    \begin{align}
    W_p(X,Y) \leq W_q(X,Y), 
    \end{align}
    especially 
    \begin{align}
    W_1(X,Y) \leq W_q(X,Y) \quad \forall q \geq 1.
    \end{align}
\end{theorem}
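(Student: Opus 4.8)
The plan is to reduce the statement to the elementary fact that $L^p$-norms on a probability space are non-decreasing in $p$, and to apply this fact inside a transport coupling. First I would dispose of the trivial case: if $W_q(X,Y)=\infty$ there is nothing to prove, so I may assume the laws $\mu=\mathcal{L}(X)$ and $\nu=\mathcal{L}(Y)$ lie in $\mathcal{P}_q(\mathbb{R}^d)\subseteq\mathcal{P}_p(\mathbb{R}^d)$, where $W_q$ is finite (in the bounded-$\mathcal{S}$ setting of the paper this is automatic).

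Next I would fix a coupling $\gamma\in\Gamma(\mu,\nu)$ that is optimal for $W_q$ — such a minimizer exists because the cost $(x,y)\mapsto d(x,y)^q$ is lower semicontinuous on a Polish space; alternatively one works with an $\varepsilon$-optimal $\gamma$ and lets $\varepsilon\to 0$ at the end. Viewing $\gamma$ as a probability measure on $\mathbb{R}^d\times\mathbb{R}^d$ and writing $h(x,y)=d(x,y)$, the key step is Jensen's inequality for the concave map $t\mapsto t^{p/q}$ on $[0,\infty)$ (concave since $0<p/q\le 1$) applied to the nonnegative random variable $h^q$:
\[
\int h^{p}\,d\gamma \;=\; \int \bigl(h^{q}\bigr)^{p/q}\,d\gamma \;\le\; \left(\int h^{q}\,d\gamma\right)^{p/q} \;=\; W_q(X,Y)^{p}.
\]
Raising both ends to the power $1/p$ and using that $\gamma$ is a feasible (if not optimal) coupling for $W_p$, so that $W_p(X,Y)^p\le\int h^{p}\,d\gamma$, yields $W_p(X,Y)\le W_q(X,Y)$. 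Specializing to $p=1$ gives the advertised consequence $W_1(X,Y)\le W_q(X,Y)$ for all $q\ge 1$.

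For the one-dimensional case actually used in the paper the argument is even shorter: by Corollary~\ref{onedimW1} and its $W_p$-analogue one has $W_p(X,Y)^p=\int_0^1|F_X^{-1}(s)-F_Y^{-1}(s)|^p\,ds$, and Jensen's inequality on $[0,1]$ equipped with Lebesgue measure, applied to $g(s)=|F_X^{-1}(s)-F_Y^{-1}(s)|$, directly gives $\bigl(\int_0^1 g^{p}\bigr)^{1/p}\le\bigl(\int_0^1 g^{q}\bigr)^{1/q}$. I do not anticipate a real obstacle here; the only points needing a little care are checking the direction of Jensen together with the admissible range $0<p/q\le 1$, and justifying the existence of an optimal coupling (or, equivalently, passing through near-optimal couplings), both of which are standard.
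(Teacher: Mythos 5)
Your proposal is correct and matches the paper's approach: the paper's entire proof is the one-line remark ``By Jensen inequality, as $z\mapsto z^{q/p}$ is convex,'' which is exactly the inequality $\bigl(\int h^p\,d\gamma\bigr)^{q/p}\le\int h^q\,d\gamma$ you apply to a ($W_q$-optimal) coupling. You have simply supplied the details the paper leaves implicit — existence/near-optimality of the coupling, feasibility of $\gamma$ for $W_p$, and the one-dimensional quantile-function shortcut — all of which are sound.
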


\begin{proof}
By Jensen inequality, as $z\rightarrow z^{q/p}$ is convex.
\end{proof}

 \subsection{Wasserstein-Distance of Quantile-Transformed Variables}
\begin{definition}[Quantile-Transformed Wasserstein Distance]
    Let $X, Y, Z$ be random variables on $\mathbb{R}$ and let  $F_X, F_Y, F_Z: \mathbb{R}\to[0,1]$ denote their distribution functions and $f_Z$ denote the density of $Z$.
    The (by Z) quantile-transformed Wasserstein Distance is then given by:
    \begin{align}
        W_Z(X,Y) &:= W_1(F_Z(X),F_Z(Y)) \\
        & = \int_0^1\left|F_{F_Z(X)}(t)-F_{F_Z(Y)}(t)\right| \,dt \\
        &=\int_0^1\left|F_X(F_Z^{-1}(t))-F_Y(F_Z^{-1}(t))\right|\, dt\\
        &=\int_\mathbb{R}\left|F_X(s)-F_Y(s)\right|f_Z(s) \, ds 
        \label{Q-Wasserstein} 
    \end{align}
\end{definition}

\begin{proposition} Properties of the quantile-transformed Wasserstein-distance
\begin{enumerate}
    \item For any real number $a \neq 0$, $W_Z(aX, aY) = W_{Z/|a|}(X, Y).$
    \item For any fixed vector $x$, $W_Z(X + x, Y + x) = W_{Z-x}(X, Y ).$
\end{enumerate}
\end{proposition}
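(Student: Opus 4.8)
The plan is to reduce both identities to the defining relation $W_Z(\cdot,\cdot)=W_1(F_Z(\cdot),F_Z(\cdot))$ by means of two elementary pointwise identities for composed distribution functions, and then to invoke the scaling and translation behaviour of the ordinary one-dimensional $W_1$ recorded earlier in this appendix.

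For the translation property I would first note that $F_{Z-x}(r)=\mathbb{P}(Z-x\le r)=\mathbb{P}(Z\le r+x)=F_Z(r+x)$ for every $r$, i.e. the distribution function of $Z-x$ is the shift $F_{Z-x}=F_Z(\,\cdot+x)$. Evaluating at $X$ and at $Y$ gives the almost-sure identities $F_Z(X+x)=F_{Z-x}(X)$ and $F_Z(Y+x)=F_{Z-x}(Y)$, so that by definition $W_Z(X+x,Y+x)=W_1\!\big(F_Z(X+x),F_Z(Y+x)\big)=W_1\!\big(F_{Z-x}(X),F_{Z-x}(Y)\big)=W_{Z-x}(X,Y)$.

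For the scaling property I would argue the same way. For $a>0$, $F_{Z/a}(r)=\mathbb{P}(Z/a\le r)=\mathbb{P}(Z\le ar)=F_Z(ar)$, hence $F_{Z/a}=F_Z(a\,\cdot)$, giving $F_Z(aX)=F_{Z/a}(X)$ and $F_Z(aY)=F_{Z/a}(Y)$ a.s., and therefore $W_Z(aX,aY)=W_1\!\big(F_{Z/a}(X),F_{Z/a}(Y)\big)=W_{Z/a}(X,Y)=W_{Z/|a|}(X,Y)$. For $a<0$ the same computation (now using that $F_Z$ is continuous, so $\mathbb{P}(Z\le ar)=\mathbb{P}(Z<ar)$) yields $F_Z(ar)=1-F_{Z/a}(r)$, i.e. $F_Z(aX)=1-F_{Z/a}(X)$; the extra complementation costs nothing because $W_1$ is invariant under $U\mapsto c-U$ (combine $W_1(-U,-V)=W_1(U,V)$ with translation invariance), so one again lands on $W_{Z/a}(X,Y)$. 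If one prefers to keep everything in integral form, the identical content follows from the density representation $W_Z(X,Y)=\int_\mathbb{R}|F_X(s)-F_Y(s)|f_Z(s)\,ds$ by the substitution $u=s/a$ (resp. $u=s-x$), recognising the Jacobian factor $|a|f_Z(au)$ as the density $f_{Z/a}$ (resp. $f_Z(u+x)$ as $f_{Z-x}$), and using $F_{aX}(s)=F_X(s/a)$, $F_{X+x}(s)=F_X(s-x)$.

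I do not expect a genuine obstacle here: the statement is essentially a bookkeeping consequence of the definition. The only point requiring a moment's care is the sign of $a$ — for $a<0$ the map $s\mapsto as$ reverses the orientation of the line and turns $F_{aX}$ into $1-F_X(s/a)$ — and one has to check, as above, that neither effect alters the value of $W_1$, which rests on the invariance of $W_1$ under reflections and shifts together with the continuity of $F_Z$.
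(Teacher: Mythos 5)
The paper states this proposition in its background appendix without proof, so there is no in-paper argument to compare against; judged on its own, your treatment of the translation property and of the scaling property for $a>0$ is correct and is the natural route: reduce everything to $W_1(F_Z(\cdot),F_Z(\cdot))$ via the pointwise identities $F_{Z-x}(X)=F_Z(X+x)$ and $F_{Z/a}(X)=F_Z(aX)$, or equivalently substitute in the density representation $\int_{\mathbb{R}}|F_X(s)-F_Y(s)|f_Z(s)\,ds$.

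The case $a<0$, however, contains a genuine gap. Your computation --- which is itself correct --- lands on $W_{Z/a}(X,Y)$, whereas the statement asserts $W_{Z/|a|}(X,Y)$; for $a<0$ these are different objects, since $Z/a=-(Z/|a|)$ and the quantile-transformed distance is \emph{not} invariant under reflecting the reference variable: $W_{-V}(X,Y)=\int_{\mathbb{R}}|F_X(u)-F_Y(u)|\,f_V(-u)\,du$ reweights $|F_X-F_Y|$ by a different density than $W_V(X,Y)$ does. Concretely, take $a=-1$, $Z\sim\mathcal{U}[0,1]$, $X\sim\mathcal{U}[0,\tfrac12]$, $Y\sim\mathcal{U}[\tfrac12,1]$: then $W_Z(-X,-Y)=W_{-Z}(X,Y)=0$ because $f_Z(-u)$ vanishes on the support of $|F_X-F_Y|$, while the claimed right-hand side is $W_{Z}(X,Y)=\tfrac12>0$. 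So the identity is false for negative $a$ as written; the correct general form is $W_Z(aX,aY)=W_{Z/a}(X,Y)$ (which your argument actually proves), and $|a|$ may be substituted for $a$ only when $a>0$ --- consistent with the remark following the proposition, which restricts the scale-invariance statement to $a>0$. The final step $W_{Z/a}(X,Y)=W_{Z/|a|}(X,Y)$ that your write-up silently relies on is exactly the point that fails, and you should have flagged it rather than absorbed it into the bookkeeping.
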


\begin{remark}
The quantile-transformed Wasserstein-1-distance is invariant under monotone transformations, for instance, under scale tranformations: For $a>0$: 
\begin{align}
W_{Z}(X,Y) = W_{aZ}(aX, aY).
\end{align}
\end{remark}

    
    
\subsection{Pushforward}

The pushforward of a measure along a measurable function assigns to a subset the original measure of the preimage under the function of that subset.
\begin{definition}
    Let $(X_1, \Sigma_1)$ and $(X_2, \Sigma_2)$ be two measurable spaces, $f: X_1 \rightarrow X_2$ a measurable function and $\mu: \Sigma_1 \rightarrow [0,\infty]$ a measure on $(X_1, \Sigma_1)$. The pushforward of $\mu$ is defined as 
    \begin{align}
    f\#\mu: \Sigma_2 \rightarrow [0,\infty], f\#\mu(A)=\mu(f^{-1}(A)) \, \forall A \in \Sigma_2
    \end{align}
\end{definition}

\begin{corollary}
\label{pushforward}
    Let again $(X_1, \Sigma_1)$ and $(X_2, \Sigma_2)$ be two measurable spaces, $f: X_1 \rightarrow X_2$ a measurable function and $\mu: \Sigma_1 \rightarrow [0,\infty]$ a measure on $(X_1, \Sigma_1)$.
    If $g$ is another measurable function on $X_2$, then
    \begin{equation}
        \label{changeofvariable}
        \int_{X_2} g \circ f \,d\mu = \int_{X_1} g\, d(f\#\mu)
    \end{equation}
\end{corollary}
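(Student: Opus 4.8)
The plan is to prove the change-of-variables identity by the standard measure-theoretic bootstrapping argument, reading the claimed equality in its well-typed form $\int_{X_1}(g\circ f)\,d\mu = \int_{X_2} g\, d(f\#\mu)$ (so that $g$ is $\Sigma_2$-measurable on $X_2$, $g\circ f$ is $\Sigma_1$-measurable on $X_1$, and $f\#\mu$ is a measure on $X_2$). I would verify it first for indicator functions, extend to nonnegative simple functions by linearity, to nonnegative measurable functions by monotone convergence, and finally to integrable $g$ by splitting into positive and negative parts.

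First I would take $g = \mathbf{1}_A$, the indicator of a set $A \in \Sigma_2$. Since $f$ is measurable, $f^{-1}(A)\in\Sigma_1$ and $g\circ f = \mathbf{1}_{f^{-1}(A)}$ is $\Sigma_1$-measurable. By the definition of the integral of an indicator and then the definition of the pushforward measure, $\int_{X_1}(g\circ f)\,d\mu = \mu(f^{-1}(A)) = (f\#\mu)(A) = \int_{X_2} g\, d(f\#\mu)$, so the identity holds on indicators. By linearity of the integral in the integrand it then holds for every nonnegative simple function $g = \sum_{i=1}^n c_i\,\mathbf{1}_{A_i}$ with $c_i\ge 0$, $A_i\in\Sigma_2$, because $g\circ f = \sum_{i=1}^n c_i\,\mathbf{1}_{f^{-1}(A_i)}$.

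For a general nonnegative $\Sigma_2$-measurable $g$, pick simple functions $0\le g_n\uparrow g$ pointwise; then $g_n\circ f\uparrow g\circ f$ pointwise on $X_1$, and applying the monotone convergence theorem on both sides yields $\int_{X_1}(g\circ f)\,d\mu = \lim_n \int_{X_1}(g_n\circ f)\,d\mu = \lim_n \int_{X_2} g_n\, d(f\#\mu) = \int_{X_2} g\, d(f\#\mu)$. Finally, for general $g$ with $\int_{X_2}|g|\, d(f\#\mu)<\infty$, write $g = g^+ - g^-$; the nonnegative case applied to $|g|$ together with the identity $|g|\circ f = |g\circ f|$ shows the left-hand integral is finite as well, and subtracting the nonnegative-case identities for $g^+$ and $g^-$ gives the claim. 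The argument is entirely routine and there is no real obstacle; the only place where something genuinely has to be used is the base case on indicators, which is precisely the definition of $f\#\mu$, and the measurability of $f$ is what guarantees $f^{-1}(A)\in\Sigma_1$ so that every object appearing is well defined.
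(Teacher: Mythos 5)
Your proof is correct and is the standard bootstrapping argument (indicators, then nonnegative simple functions, then monotone convergence, then positive and negative parts); the paper itself states this corollary as background material without any proof, so there is nothing to compare against beyond noting that your argument is the canonical one. You are also right to read the identity in its well-typed form $\int_{X_1} (g\circ f)\,d\mu = \int_{X_2} g\,d(f\#\mu)$: the subscripts $X_1$ and $X_2$ in the paper's display are evidently swapped, and your base case on indicators, which reduces to the definition $(f\#\mu)(A)=\mu(f^{-1}(A))$, makes clear why that is the correct reading.
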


\section{Complete proofs}
\begin{lemma}
\label{F_F_X}
If we quantile-transform a continuous random variable $X \in \mathbb{R}$ by its own distribution $F_X$, the result will follow a uniform distribution in $[0,1]$:
\begin{align}
F_X(X) \sim \mathcal{U}[0,1], \text{ so } F_{F_X}(x)=x.
\end{align}
\end{lemma}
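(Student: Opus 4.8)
The statement is the classical probability integral transform, so the plan is to compute the distribution function of $U := F_X(X)$ directly and check that it equals the identity on $[0,1]$. Fix $u \in (0,1)$. Since $F_X$ is continuous with $\lim_{x\to-\infty} F_X(x) = 0$ and $\lim_{x\to\infty} F_X(x) = 1$, the intermediate value theorem supplies a point $x_u \in \mathbb{R}$ with $F_X(x_u) = u$. I would then establish the chain $\mathbb{P}(F_X(X) \le u) = \mathbb{P}(X \le x_u) = F_X(x_u) = u$, where only the first equality requires any work.

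For that first equality, monotonicity of $F_X$ gives at once the inclusion $\{X \le x_u\} \subseteq \{F_X(X) \le u\}$. For the reverse direction I would observe that on the event $\{X > x_u\}$ one has $F_X(X) \ge F_X(x_u) = u$, so $\{F_X(X) \le u\} \cap \{X > x_u\}$ forces $F_X(X) = u$, i.e. $X$ lies in the maximal (possibly empty) interval $I$ to the right of $x_u$ on which $F_X$ is constantly $u$. Such an interval carries no mass: for $x' < x''$ in $I$ we have $\mathbb{P}(x' < X \le x'') = F_X(x'') - F_X(x') = 0$, and a countable exhaustion of $I$ yields $\mathbb{P}(X \in I) = 0$. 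Hence the two events $\{F_X(X) \le u\}$ and $\{X \le x_u\}$ coincide up to a null set, and the first equality holds.

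Putting this together, $F_{F_X}(u) = \mathbb{P}(F_X(X) \le u) = u$ for every $u \in (0,1)$; since $F_X(X)$ takes values in $[0,1]$, this probability is $0$ for $u < 0$ and $1$ for $u \ge 1$, and continuity lets one read off the endpoints as well, so $F_X(X) \sim \mathcal{U}[0,1]$ and $F_{F_X}(x) = x$ on $[0,1]$, as claimed. The only delicate point is the treatment of the plateaus of $F_X$: if one simply assumes $F_X$ strictly increasing, the argument collapses to the monotone-inverse identity $\{F_X(X) \le u\} = \{X \le x_u\}$, but in the general continuous case the null-set argument above — equivalently, the fact that the law of $X$ assigns zero mass to the level sets of its own continuous CDF — is the step that has to be spelled out.
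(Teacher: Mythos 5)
Your proof is correct. The paper states this lemma without proof, treating it as the classical probability integral transform, so there is no in-paper argument to compare against; your write-up supplies a complete and careful derivation, and in particular the null-set treatment of the plateaus of $F_X$ (the level sets $\{x : F_X(x)=u\}$ carrying no mass under the law of $X$) is exactly the point that a naive appeal to strict monotonicity would gloss over. Nothing further is needed.
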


 \begin{lemma}
\label{Lemma_F_X(Y)}
Let $X,Y$ be two random variables in $\mathbb{R}$ with cumulative distribution functions $F_X, F_Y$. The cumulative distribution function of a random variable $Z=F_X(Y)$ is given by $F_Y(F_X^{-1}(z))$:
    \begin{align}
        F_{F_X(Y)}(z) =F_Z(z) &= \mathbb{P}(Z\leq z)=\mathbb{P}(F_X(Y) \leq z)\\
        &=\mathbb{P}(Y \leq F_X^{-1}(z))=F_Y(F_X^{-1}(z)) \nonumber
    \end{align}
If $F_X$ and $F_Y$ are bijective and have the same support, then 
\begin{align}
F_{F_X(Y)} = F_{F_Y(X)}^{-1}.
\end{align}
\end{lemma}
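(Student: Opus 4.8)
The plan is to reduce both assertions to the Galois duality between a distribution function and its quantile function, $G^{-1}(p)\le t \iff p\le G(t)$, exploited under the paper's standing convention that the reference score distribution is continuous and strictly increasing (the same hypothesis invoked in Section~\ref{section:roc} to guarantee $\ROC_{X,X}(r)=r$), so that $F_X^{-1}$ is a genuine two-sided inverse of $F_X$.

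First I would justify the displayed chain of equalities, of which only $\mathbb{P}(F_X(Y)\le z)=\mathbb{P}(Y\le F_X^{-1}(z))$ requires an argument. Applying the increasing bijection $F_X^{-1}$ to the event $\{F_X(Y)\le z\}$ turns it into $\{Y\le F_X^{-1}(z)\}$, and applying $F_X$ to the latter returns the former, so the two events coincide; hence their probabilities agree, and $\mathbb{P}(Y\le F_X^{-1}(z))=F_Y(F_X^{-1}(z))$ is simply the definition of $F_Y$. In the purely continuous but not strictly increasing case one instead argues through the Galois inequality together with the almost-sure identity $F_X^{-1}(F_X(Y))=Y$ recalled in Section~\ref{section:roc}, which remains valid as long as $F_X$ is strictly increasing on the range of $Y$.

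For the second assertion I would compose. When $F_X$ and $F_Y$ are bijective with a common support, $F_X^{-1}$ and $F_Y^{-1}$ are honest inverses, so the first part gives $F_{F_X(Y)}=F_Y\circ F_X^{-1}$ and, by the same computation with the roles of $X$ and $Y$ swapped, $F_{F_Y(X)}=F_X\circ F_Y^{-1}$. Then
\[
(F_Y\circ F_X^{-1})\circ(F_X\circ F_Y^{-1})=F_Y\circ F_Y^{-1}=\mathrm{id}, \qquad (F_X\circ F_Y^{-1})\circ(F_Y\circ F_X^{-1})=\mathrm{id},
\]
so $F_{F_X(Y)}$ and $F_{F_Y(X)}$ are mutually inverse, which is exactly the claim $F_{F_X(Y)}=F_{F_Y(X)}^{-1}$.

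The only real obstacle is the one flagged above: ensuring that the quantile function behaves as a true inverse when passing from $\{F_X(Y)\le z\}$ to $\{Y\le F_X^{-1}(z)\}$. Under the continuity-and-strict-monotonicity convention this step is immediate; without it the identity can genuinely fail at boundary points where $F_X$ is flat, so the statement is to be read with that convention in force. Everything else is routine substitution and composition.
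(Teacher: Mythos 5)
Your argument is correct and follows essentially the same route as the paper, whose ``proof'' is just the inline chain of equalities $\mathbb{P}(F_X(Y)\le z)=\mathbb{P}(Y\le F_X^{-1}(z))=F_Y(F_X^{-1}(z))$ together with the unproved composition identity for the bijective case. You merely make explicit what the paper leaves implicit --- that the event identity requires $F_X^{-1}$ to act as a genuine inverse (and can fail where $F_X$ is flat), and that the second claim follows because $F_Y\circ F_X^{-1}$ and $F_X\circ F_Y^{-1}$ are mutually inverse --- which is a faithful, slightly more careful rendering of the intended proof.
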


\begin{proof}[Proof of Theorem \ref{uniformbias}]
For $x=\text{EO}$:

\begin{align}
     \bias_x^{\mathcal{U}}(S|A=a, S|A=b)&=\frac{1}{|\mathcal{S}|}\int_{\mathcal{S}}|\cbias_x(S|A=a, S|A=b; s)| ds\\
     & = \frac{1}{|\mathcal{S}|}\int_{\mathcal{S}}|\mathbb{P}(S> s|A=b, Y=0) - \mathbb{P}(S> s|A=a, Y=0)| ds \\
     & = \frac{1}{|\mathcal{S}|}\int_{\mathcal{S}}|(1-F_{b0}(s))-(1-F_{a0}(s))| ds \\
     & = \frac{1}{|\mathcal{S}|}\int_{\mathcal{S}}|F_{a0}(s)-F_{b0}(s)| ds \\
     & \overset{\eqref{onedimW1}}{=} \frac{1}{|\mathcal{S}|}\cdot W_1(S|A=a,Y=0, S|A=b,Y=0).
\end{align}

For $x=\text{PE}$ and $x=\text{\textit{IND}}$ the result follows similary.
(ii) follows from Theorem \ref{thm:mean_diff}.
 \end{proof}

\begin{proof}[Proof of Theorem \ref{quantilebias}]
For $x=\text{EO}$:
\begin{align}
        \bias_x^{S}(S|A=a, S|A=b)&=\int_\mathcal{S} |\cbias_x(S|A=a, S|A=b; s)| f(s) ds\\
        & \overset{\eqref{changeofvariable}}{=} \int_\mathcal{S} |\cbias_x(S|A=a, S|A=b; s)| d(F^{-1}\#\mu) \\
        &= \int_0^1 |\cbias_x(S|A=a, S|A=b; F^{-1}(p))| dp \\
        & \overset{\eqref{uniformbias}}{=} W_1(F_{F_S(S_{ay})}, F_{F_S(S_{by})}) \\
        & \overset{\eqref{Lemma_F_X(Y)}}{=} W_1(F_{ay}\circ F_S^{-1}, F_{by}\circ F_S^{-1})
\end{align}
For $x=\text{PE}$ and $x=\text{\textit{IND}}$ the result follows similary.
(ii) follows from Theorem \ref{thm:mean_diff}.
    
\end{proof}

\begin{proof}[Proof of Theorem \ref{thm:basicrocwasserstein}]
\begin{align}
        W_{Z}(X,Y) &= \int_0^1 \left|F_{F_Z(X)}(s) - F_{F_Z(Y)}(s)\right|ds\\
        & \overset{\eqref{Lemma_F_X(Y)}}{=} \int_0^1\left|F_X(F_Z^{-1}(s)) - F_Y(F_Z^{-1}(s))\right|ds\\ 
        &=\int_0^1\left|(1-F_X(F_Z^{-1}(1-r)) - (1-F_Y(F_Z^{-1}(1-r)))\right|dr\\
        &=\int_0^1\left|\ROC_{X,Z}(r)-\ROC_{Y,Z}(r)\right|dr
\end{align}
\end{proof}

\begin{proof}[Proof of Theorem \ref{prop:mixture}]
Results directly from Def. \ref{definition-bias-S} by using the additivity of the density in (\ref{density-wasserstein}). 
\end{proof}

\begin{proof}[Proof of Lemma \ref{lemma:wassersteinroc}]
Using Theorem \ref{thm:basicrocwasserstein} and $\ROC_{X,X}(r)=r$.
\end{proof}

Under additional assumptions, we can follow that a quantile-transformation by group $a$ and $b$ result in equal distances:
\begin{lemma}
    \label{lemma:exchange_quantile_function}
    If $S_{ay}$ and $S_{by}$ have bijective cdfs and identical support, then 
    \begin{align}
        W_{S_{ay}}(S_{ay},S_{by})=W_{S_{by}}(S_{ay},S_{by})=W_{S_y}(S_{ay},S_{by}).
    \end{align}
\end{lemma}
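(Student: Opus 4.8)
The plan is to reduce the first equality to a purely geometric fact and the second to the mixture identity of Proposition~\ref{prop:mixture}. For the first equality I would invoke Lemma~\ref{lemma:wassersteinroc} (which is Theorem~\ref{thm:basicrocwasserstein} applied with the quantile-transforming variable taken to be $S_{ay}$, resp.\ $S_{by}$, using that a bijective cdf makes $\ROC_{S_{ay},S_{ay}}(r)=r$): it already states
\begin{align}
W_{S_{ay}}(S_{ay},S_{by}) = \int_0^1 \left|\ROC_{S_{by},S_{ay}}(r)-r\right|dr, \qquad W_{S_{by}}(S_{ay},S_{by}) = \int_0^1 \left|\ROC_{S_{ay},S_{by}}(r)-r\right|dr. \nonumber
\end{align}
So it remains to show these two integrals coincide.

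The key observation is that, when $F_{ay}$ and $F_{by}$ are bijective with a common support, the two relevant ROC curves are functional inverses of one another: $\ROC_{S_{ay},S_{by}} = \bigl(\ROC_{S_{by},S_{ay}}\bigr)^{-1}$ on $[0,1]$. This drops out of the definition $\ROC_{G,H}(p)=1-F_G(F_H^{-1}(1-p))$ once $F_H^{-1}$ is a genuine inverse, and it is just the ROC-form of the identity $F_{F_X(Y)}=F_{F_Y(X)}^{-1}$ from Lemma~\ref{Lemma_F_X(Y)}. Writing $g:=\ROC_{S_{by},S_{ay}}$ — a continuous, strictly increasing bijection of $[0,1]$ with $g(0)=0$ and $g(1)=1$ — the task reduces to the identity
\begin{align}
\int_0^1 |r-g(r)|\,dr = \int_0^1 |r-g^{-1}(r)|\,dr. \nonumber
\end{align}
I would prove this by a reflection argument. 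The planar set $A=\{(x,y)\in[0,1]^2:\ y \text{ lies between } x \text{ and } g(x)\}$ has Lebesgue measure $\int_0^1|g(x)-x|\,dx$ by Fubini. The swap $\sigma(x,y)=(y,x)$ is an area-preserving involution, and a short case split on the sign of $x-y$ — using monotonicity of $g^{-1}$ — shows $\sigma(A)=\{(x,y)\in[0,1]^2:\ y \text{ lies between } x \text{ and } g^{-1}(x)\}$, whose measure is $\int_0^1|g^{-1}(x)-x|\,dx$. Hence the two areas, and therefore the two integrals, are equal. This reflection/area step is the main obstacle; the only point requiring care is that $g$ may cross the diagonal several times, which is exactly why one works with the symmetric "$y$ lies between $x$ and $g(x)$" formulation rather than a signed difference.

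Finally, for $W_{S_{by}}(S_{ay},S_{by})=W_{S_y}(S_{ay},S_{by})$: the conditional score $S_y = S\mid Y=y$ is the mixture of $S_{ay}$ and $S_{by}$ with weights $\mathbb{P}(A=a\mid Y=y)$ and $\mathbb{P}(A=b\mid Y=y)$, so Proposition~\ref{prop:mixture} gives
\begin{align}
W_{S_y}(S_{ay},S_{by}) = \mathbb{P}(A=a\mid Y=y)\, W_{S_{ay}}(S_{ay},S_{by}) + \mathbb{P}(A=b\mid Y=y)\, W_{S_{by}}(S_{ay},S_{by}). \nonumber
\end{align}
By the first part the two distances on the right-hand side are equal, and the weights sum to $1$, so this convex combination equals their common value, which finishes the proof.
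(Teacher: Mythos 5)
Your proof is correct, but for the first equality it takes a different route from the paper's. The paper proves $W_X(X,Y)=W_Y(X,Y)$ by switching between the two closed forms of the one-dimensional $W_1$ (the cdf-integral and the quantile-integral of Corollary \ref{onedimW1}): it writes $W_X(X,Y)=\int_0^1|t-F_{F_X(Y)}(t)|\,dt$, substitutes $F_{F_X(Y)}=F_{F_Y(X)}^{-1}$ from Lemma \ref{Lemma_F_X(Y)}, and recognizes $\int_0^1|F_{F_Y(Y)}^{-1}(t)-F_{F_Y(X)}^{-1}(t)|\,dt$ as the quantile-function representation of $W_Y(X,Y)$. You instead phrase everything in ROC language via Lemma \ref{lemma:wassersteinroc}, observe that $\ROC_{S_{ay},S_{by}}$ and $\ROC_{S_{by},S_{ay}}$ are functional inverses, and then prove the identity $\int_0^1|r-g(r)|\,dr=\int_0^1|r-g^{-1}(r)|\,dr$ from scratch by reflecting the region between the graph of $g$ and the diagonal across $y=x$. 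These are the same underlying symmetry, but the paper outsources it to the known dual representations of $W_1$, whereas you make the geometric content explicit and self-contained; your version also makes transparent why the statement survives multiple crossings of the diagonal, and it ties the lemma directly to the ROC picture used in Section \ref{section:roc}. Your handling of the third expression, writing $S_y$ as the mixture of $S_{ay}$ and $S_{by}$ and applying Proposition \ref{prop:mixture} so that the convex combination of two equal quantities collapses, is exactly the paper's argument.
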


\begin{proof}[Proof of Lemma \ref{lemma:exchange_quantile_function}]
We show more general:
    \label{mixture_result}
    If $X,Y$ are two random variables on an interval $I$ in $\mathbb{R}$ with cdfs $F_X$ and $F_Y$ that are bijective on $I$
    $$W_X(X,Y) = W_Y(X,Y)$$ 

By Lemma \ref{F_F_X} and by Lemma \ref{Lemma_F_X(Y)}, it follows
\begin{align}
    W_{X}(X,Y) &\stackrel{(\ref{Q-Wasserstein})}{=} \int_0^1\left|F_{F_X(X)}(t)-F_{F_X(Y)}(t)\right| dt \\
    &\stackrel{\ref{F_F_X}}{=}\int_0^1\left|t-F_{F_X(Y)}(t)\right| dt \\
    &\stackrel{\ref{Lemma_F_X(Y)}}{=}\int_0^1\left|t-F^{-1}_{F_Y(X)}(t)\right| dt \\
    &\stackrel{\ref{F_F_X}}{=}\int_0^1\left|F^{-1}_{F_Y(Y)}(t)-F^{-1}_{F_Y(X)}(t)\right| dt \\
    &\stackrel{(\ref{W1_inverse})}{=}W_{Y}(X,Y)
\end{align}

It follows for $Z=w_1X+w_2Y$:
    \begin{align}
        W_{Z}(X,Y) &= w_1 W_X(X,Y) + w_2 W_Y(X,Y)\\
        &= W_X(X,Y) = W_Y(X,Y) \nonumber
    \end{align}
and Lemma \ref{lemma:exchange_quantile_function} as a special case.
\end{proof}

Lemma \ref{lemma:exchange_quantile_function} implies that quantile-transformation can under the above assumptions be performed on either of the two groups or the whole sample with the same result. Under the same assumptions, $\ROC$, $\AUROC$ and $\gini$ become symmetrical, i.e. $\ROC_{S_{ay},S_{by}}=\ROC_{S_{by},S_{ay}}$.

\begin{proof}[Proof of Theorem \ref{thm:sep_as_mixture}]
Using Proposition \ref{prop:mixture} and Lemma \ref{lemma:wassersteinroc}:
    \begin{align}
        & \bias_{\text{EO}}^{S}(S|A=a, S|A=b) = W_{S}(S_{a0},S_{b0}) \\
        & \overset{\eqref{prop:mixture}}{=} w_{a0} W_{S_{a0}}(S_{a0},S_{b0}) + w_{b0} W_{S_{b0}}(S_{a0},S_{b0})\\
        &+ w_{a1} W_{S_{a1}}(S_{a0},S_{b0}) + w_{b1} W_{S_{b1}}(S_{a0},S_{b0}) \nonumber \\
        & \overset{\eqref{lemma:wassersteinroc}}{=} w_{a0}  \int \left|\ROC_{S_{a0},S_{b0}}(r)-r\right|dr + w_{b0} \int \left|\ROC_{S_{b0},S_{a0}}(r)-r\right|dr\\
        & + w_{a1}\int |\ROC_{S_{a0}, S_{a1}}(r)-\ROC_{S_{b0}, S_{a1}}(r)| dr \nonumber\\
        & + w_{b1}\int |\ROC_{S_{a0}, S_{b1}}(r)-\ROC_{S_{b0}, S_{b1}}(r)| dr \nonumber
    \end{align} 

    For predictive equality, the result follows similarly.

    \end{proof}

    \begin{proof}[Proof of Theorem \ref{thm:final_inequality_wasserstein_roc}]
    From theorem \ref{thm:sep_as_mixture} follows by triangle-Inequality:
    \begin{align}
        & \bias_{\text{EO}}^{S}(S|A=a, S|A=b)+ \bias_{\text{PE}}^{S}(S|A=a, S|A=b) =  W_{S}(S_{a0},S_{b0}) + W_{S}(S_{a1},S_{b1}) \\
        & \geq w_{0}  \int \left|\ROC_{S_{a0},S_{b0}}(r)-r\right|dr + w_{1} \int \left|\ROC_{S_{a1},S_{b1}}(r)-r\right| dr \\
         & + \min(w_{a0},w_{a1}) \cdot \bias_{\text{xROC}} + \min(w_{b0},w_{b1}) \cdot \bias_{\text{xROC}}  \nonumber\\
         & \geq w_{0}  \int \left|\ROC_{S_{a0},S_{b0}}(r)-r\right|dr + w_{1} \int \left|\ROC_{S_{a1},S_{b1}}(r)-r\right|dr \\ 
         & + 2 \min(w_{a0},w_{a1}, w_{b0}, w_{b1}) \cdot \bias_{\text{xROC}} \nonumber
    \end{align}
         and also
    \begin{align}
        & \bias_{\text{EO}}^{S}(S|A=a, S|A=b)+ \bias_{\text{PE}}^{S}(S|A=a, S|A=b) =  W_{S}(S_{a0},S_{b0}) + W_{S}(S_{a1},S_{b1}) \\
         & \geq w_{0}  \int \left|\ROC_{S_{a0},S_{b0}}(r)-r\right|dr + w_{1} \int \left|\ROC_{S_{a1},S_{b1}}(r)-r\right|dr \\
         & + \min(w_{a1},w_{b1}) \cdot \bias_{\text{ROC}} + \min(w_{a0},w_{b0}) \cdot \bias_{\text{ROC}}  \nonumber\\
         & \geq w_{0}  \int \left|\ROC_{S_{a0},S_{b0}}(r)-r\right|dr + w_{1} \int \left|\ROC_{S_{a1},S_{b1}}(r)-r\right|dr \\
         & + 2 \min(w_{a0},w_{a1}, w_{b0}, w_{b1}) \cdot \bias_{\text{ROC}} \nonumber
    \end{align}

    By additionally using Corollary \ref{lemma:rocwasserstein2}, we get 
    \begin{align}
     & \quad \bias_{\text{EO}}^{S}+ \bias_{\text{PE}}^{S}  =  W_{S}(S_{a0},S_{b0}) + W_{S}(S_{a1},S_{b1}) \\ &\geq \min(w_{a0},w_{a1}, w_{b0}, w_{b1}) \cdot (\bias_{\text{ROC}} + \bias_{\text{xROC}}) + \frac{w_0}{2} \gini(S_{a0}, S_{b0}) + \frac{w_1}{2}\gini(S_{a1}, S_{b1}) \nonumber
    \end{align}

    As $\frac{w_i}{2} \leq \frac{\min(w_{a0},w_{a1}, w_{b0}, w_{b1})}{2}$ for $i=0,1$, we can combine all weights to get

    \begin{align}
     & \quad \bias_{\text{EO}}^{S}+ \bias_{\text{PE}}^{S}  =  W_{S}(S_{a0},S_{b0}) + W_{S}(S_{a1},S_{b1}) \\ 
    &\geq \min(w_{a0},w_{a1}, w_{b0}, w_{b1}) \cdot (\bias_{\text{ROC}} + \bias_{\text{xROC}}) + \frac{w_0}{2} \gini(S_{a0}, S_{b0}) + \frac{w_1}{2}\gini(S_{a1}, S_{b1}) \\
     & \geq \frac{\min(w_{a0},w_{a1}, w_{b0}, w_{b1})}{2}(\bias_{\text{ROC}} + \bias_{\text{xROC}} + \gini(S_{a0}, S_{b0}) + \gini(S_{a1}, S_{b1}))
    \end{align}

    Note, that if $F_{ay}$ and $F_{by}$ have identical supports and permit an inverse, then $\gini(S_{ay},S_{by}) = \gini(S_{by},S_{ay})$. If this symmetry is not fulfilled, the minimum of both must be used on the right side.
\end{proof}

\begin{proof}[Proof of Theorem \ref{thm:zeroseparation2}]
Let $\bias_{\text{EO}}(S|A=a, S|A=b) =0$, it follows $F_{b0}=F_{a0}$ almost everywhere. Then
\begin{align}
        & \bias_{\text{ROC}}(S|A=a, S|A=b)\\
        &= \int_0^1 |F_{b0}(F_{b1}^{-1}(s))-F_{a0}(F_{a1}^{-1}(s))| ds \\
        &= \int_0^1 |F_{a0}(F_{b1}^{-1}(s))-F_{b0}(F_{a1}^{-1}(s))| ds\\
        &= \bias_{\text{xROC}}(S|A=a, S|A=b)
\end{align}
For predictive equality, the statement follows similarly.
\end{proof}


\section{Experiments}
\setcounter{figure}{0}
\renewcommand{\thefigure}{C\arabic{figure}}

\setcounter{table}{0}
\renewcommand{\thetable}{C\arabic{table}}
We perform experiments in python using the COMPAS dataset, the Adult dataset and the German Credit dataset. Empirical implementations of Wasserstein-distance (\emph{scipy.wasserstein\_distance}), calibration curves (\emph{sklearn.calibration.calibration\_curve}) and ROC curves (\emph{sklearn.metrics.roc\_curve}) were used.
\subsection{Statistical Testing}
We perform permutation tests \cite{diciccioEvaluatingFairnessUsing2020, schefzikFastIdentificationDifferential2021} with 1000 permutations and one pseudocount to determine the statistical significance of the calculated biases under the null hypothesis of group parity. The calibration biases were calculated using 50 bins.

\subsection{Details on COMPAS experiments}
Full results are shown in Table \ref{compas-table-appendix}.

\begin{table}
  \caption{Bias of COMPAS score (complete table)}
  \label{compas-table-appendix}
  \vskip 0.1in
  \centering
  \begin{tabular}{lllll}
    \toprule
    type of bias & total & pos. & neg. & p-value \\
    \midrule
    $\bias_{\text{EO}}^S$ & 0.161 & 0\% &  100\% & <0.01\\ 
    $\bias_{\text{PE}}^S$ & 0.154 & 0\% &  100\% & <0.01 \\ 
    $\bias_{\text{CALI}}^S$ & 0.034  & 79\% & 21\% & 0.30\\ 
    \midrule
    $\bias_{\text{ROC}}$ & 0.016 & 46\%  & 54\% & 0.31 \\ 
    $\bias_{\text{xROC}}$ & 0.273 & 0\% & 100\% & <0.01 \\ 
    \midrule
    $\bias_{\text{EO}}^\mathcal{U}$ & 0.152  & 0\% & 100\% & <0.01\\ 
    $\bias_{\text{PE}}^\mathcal{U}$& 0.163 & 0\% & 100\% & <0.01\\ 
    $\bias_{\text{CALI}}^\mathcal{U}$&  0.037 &  78\%&  22\% & 0.23\\ 
    \bottomrule
  \end{tabular}
 \vskip -0.1in
\end{table}

\subsection{Details on German Credit data experiments}
Both models have been trained on 70\% of the dataset and evaluated on the remaining samples. We used min-max-scaling on continuous features and one-hot-encoding for categorical features.
Full results are shown in Table \ref{gcd-table-appendix}.
As the sample size is relatively small, it happens that even the large calibration biases are not statistically significant. 

\begin{table}
  \caption{Bias of models for German Credit data (complete table)}
  \label{gcd-table-appendix}
  \vskip 0.1in
  \centering
  \begin{tabular}{llllll}
    \toprule
    type of bias & Model& total bias & pos. & neg. & p-value \\
    \midrule
    $\bias_{\text{EO}}^S$ & LogR & 0.083 & 1\% &  99\% & 0.04\\ 
     & LogR (debiased) & 0.048 & 93\% &  7\% & 0.32\\ 
     \midrule
    $\bias_{\text{PE}}^S$ & LogR & 0.092 & 0\% &  100\% & 0.09\\ 
     & LogR (debiased) &  0.025 & 62\% &  38\% & 0.99\\ 
     \midrule
     $\bias_{\text{CALI}}^S$ & LogR & 0.291 & 46\% &  54\% & 0.35\\ 
     & LogR (debiased) & 0.299 & 58\% &  42\% & 0.26\\ 
    \midrule
    \midrule
    $\bias_{\text{ROC}}$ & LogR & 0.044  & 98\%  & 2\% & 0.80 \\ 
     & LogR (debiased) & 0.050 & 98\% &  2\% & 0.69 \\ 
     \midrule
    $\bias_{\text{xROC}}$ & LogR & 0.133  & 0\% & 100\% & 0.02 \\ 
    & LogR (debiased) & 0.057 & 93\% & 7\% & 0.54\\ 
    \midrule
    \midrule
    $\bias_{\text{EO}}^\mathcal{U}$ & LogR & 0.041 & 3\% & 97\% & 0.13\\
    & LogR (debiased) & 0.036 & 97\% & 3\% & 0.23\\
     \midrule
    $\bias_{\text{PE}}^\mathcal{U}$&LogR &  0.078 &  1\% & 99\% & 0.10\\ 
    & LogR (debiased) &  0.024 & 74\% & 26\% & 0.98\\ 
     \midrule
    $\bias_{\text{CALI}}^\mathcal{U}$& LogR & 0.246  &  40\%&  60\% & 0.57\\ 
    & LogR (debiased) & 0.225 & 75\% & 25\% & 0.84\\ 
    \bottomrule
  \end{tabular}
  \vskip -0.1in
\end{table}

\subsection{Details on Adult experiments}
All three models have been trained on 70\% of the dataset and evaluated on the remaining samples. We removed the feature \emph{relationship}, which is highly entangled with \emph{sex} through the categories \emph{husband} and \emph{wife} and we engineered the remaining features to merge rare categories. We used min-max-scaling on continuous features and one-hot-encoding for categorical features.
Fig. \ref{adult-scores-logreg}-\ref{adult-scores-xgb} show the score distributions of the three scores on the testset. 

\begin{figure}
  \centering
  \includegraphics[width=0.5\columnwidth]{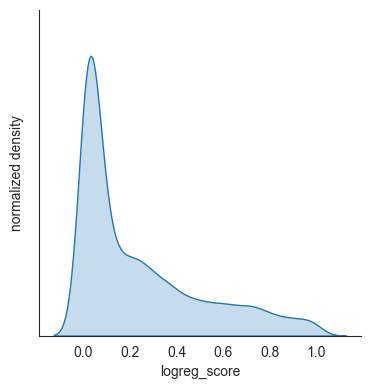}
  \caption{Distribution of logistic regression scores, trained on Adult data.}
  \label{adult-scores-logreg}
\end{figure}

\begin{figure}
  \centering
  \includegraphics[width=0.5\columnwidth]{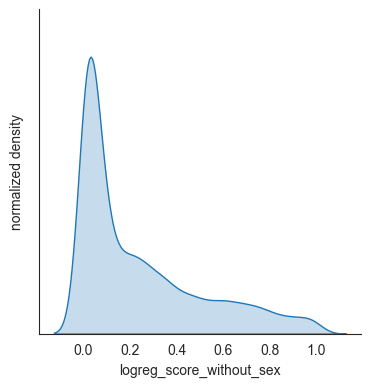}
  \caption{Distribution of logistic regression scores, trained on Adult data without protected attribute.}
  \label{adult-scores-logreg2}
\end{figure}

\begin{figure}
  \centering
  \includegraphics[width=0.5\columnwidth]{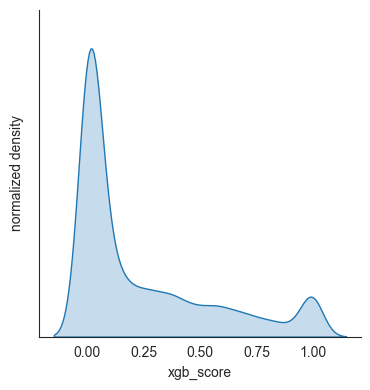}
  \caption{Distribution of XGBoost scores trained on Adult data.}
  \label{adult-scores-xgb}
\end{figure}




\end{document}